\newtheorem{example}{Example}
\newtheorem{definition}{Definition}
\newtheorem{lemma}{Lemma}
\newtheorem{proposition}{Proposition}
\newtheorem{observation}{Observation}
\newcommand{\af}{\mathit{AF}}
\newcommand{\straf}{\mathit{StrAF}}
\newcommand{\A}{\mathcal{A}}
\newcommand{\R}{\mathcal{R}}
\newcommand{\s}{\mathcal{S}}
\DeclareMathOperator{\coval}{coval}
\DeclareMathOperator{\cf}{cf}
\DeclareMathOperator{\ad}{ad}
\DeclareMathOperator{\stb}{st}
\DeclareMathOperator{\pr}{pr}
\DeclareMathOperator{\co}{co}
\newcommand{\NP}{\mathsf{NP}}
\newcommand{\coNP}{\mathsf{coNP}}
\title{Admissibility in Strength-based Argumentation: Complexity and Algorithms\\Extended Version with Proofs}
\author{
  Yohann Bacquey \\
  Universit\'e Paris Cit\'e, LIPADE\\
  F-75006 Paris\\
  \texttt{yohann.bacquey@etu.u-paris.fr} \\
   \And
 Jean-Guy Mailly \\
  Universit\'e Paris Cit\'e, LIPADE\\
  F-75006 Paris\\
  \texttt{jean-guy.mailly@u-paris.fr} \\
  \AND
  Pavlos Moraitis \\
  Universit\'e Paris Cit\'e, LIPADE \\
  F-75006 Paris\\
  \texttt{pavlos.moraitis@u-paris.fr} \\
  \And
  Julien Rossit \\
  Universit\'e Paris Cit\'e, LIPADE \\
  F-75006 Paris\\
  \texttt{julien.rossit@u-paris.fr} \\
}
\begin{document}
\maketitle

\begin{abstract}
Recently, Strength-based Argumentation Frameworks (StrAFs) have been proposed to model situations where some quantitative strength is associated with arguments. In this setting, the notion of accrual corresponds to sets of arguments that collectively attack an argument. Some semantics have already been defined, which are sensitive to the existence of accruals that collectively defeat their target, while their individual elements cannot. However, until now, only the surface of this framework and semantics have been studied. Indeed, the existing literature focuses on the adaptation of the stable semantics to StrAFs. In this paper, we push forward the study and investigate the adaptation of admissibility-based semantics. Especially, we show that the strong admissibility defined in the literature does not satisfy a desirable property, namely Dung's fundamental lemma. We therefore propose an alternative definition that induces semantics that behave as expected. We then study computational issues for these new semantics, in particular we show that complexity of reasoning is similar to the complexity of the corresponding decision problems for standard argumentation frameworks in almost all cases. We then propose a translation in pseudo-Boolean constraints for computing (strong and weak) extensions. We conclude with an experimental evaluation of our approach which shows in particular that it scales up well for solving the problem of providing one extension as well as enumerating them all.
\end{abstract}

\section{Introduction}
Among widespread knowledge representation and reasoning techniques proposed in the literature of Artificial Intelligence over the last decades, Abstract Argumentation \cite{Dung95} is an intuitive but yet powerful tool for dealing with conflicting information. 
%
Since then, the initial work of Dung has been actively extended and enriched in many directions, {\em e.g.} considering other kinds of relations between arguments \cite{CayrolL13} or additional information associated with arguments or attacks \cite{AmgoudC02,bcp02}. Among them, Strength-based Argumentation Frameworks (StrAFs) \cite{RossitMDM21} allow to associate a quantitative information with each argument. This information is a weight that intuitively represents the intrinsic strength of an argument, and is then naturally combined with attacks between arguments to induce a defeat relation that allows either to confirm an attack between two arguments or to cancel it, if the attacked argument is stronger than the attacker (w.r.t. their respective weights). StrAFs extend further this notion of defeat among arguments by building a defeat that is based on a collective attack of a group of arguments (or accrual) and by offering associate semantics. Within these semantics, arguments can collectively defeat arguments that they cannot defeat individually. Intuitively speaking, these accrual-sensitive semantics allow some kind of compensation among arguments, where the accumulation of weak arguments can create a synergy and get rid of a stronger one they collectively attack. This reasoning approach allows to produce extensions that are not considered when applying classical semantics.


In \cite{RossitMDM21} the authors presented the basics of StrAFs inspired by Dung's semantics for abstract argumentation along with some theoretical and computational results concerning classical issues related to abstract argumentation ({\em i.e.} acceptability semantics, semantics inclusion, extensions existence and verification, etc.). In this paper we propose a state of the art advancement in StrAFs by presenting original theoretical and computational results related to different aspects. More particularly the contribution of this work lies into the following aspects. The semantics proposed in \cite{RossitMDM21} exist in two versions (namely strong, and weak). Roughly speaking, a set is strongly conflict-free iff none of its elements attacks another one, whereas a set is weakly conflict-free iff it does not contain any (successful) accrual against one of its elements. After detecting that strong admissibility fails to satisfy a desirable property in Dung's abstract argumentation frameworks, namely his Fundamental Lemma \cite{Dung95}, we propose an alternative definition for strong admissibility in order to remedy this issue and we define new admissibility-based semantics for StrAFs. 
Furthermore, we study the complexity of reasoning with these semantics and in particular we show that, surprisingly, the complexity does not increase with respect to the complexity of reasoning with standard AFs. For computing the extensions under these semantics, we propose algorithms based on pseudo-Boolean constraints. 



\section{Background Notions}\label{section:bacground}
We assume that the reader is familiar with abstract argumentation \cite{Dung95}. We consider finite {\em argumentation frameworks} (AFs) $\langle \A,\R\rangle$, where $\A$ is the set of {\em arguments}, and $\R\subseteq \A \times \A$ is the {\em attack relation}. We will use $\cf(\af)$ and $\ad(\af)$ to denote, respectively, the conflict-free and admissible sets of an AF $\af$, and $\co(\af)$, $\pr(\af)$ and $\stb(\af)$ for its extensions under the complete, preferred and stable semantics. For more details on the semantics of AFs, we refer the interested reader to \cite{Dung95,HOFASemantics}.

A {\em Strength-based Argumentation Framework} (StrAF) \cite{RossitMDM21} is a triple $\straf = \langle \A, \R, \s \rangle$ where $\A$ and $\R$ are arguments and attacks, and $\s : \A \rightarrow \mathbb{N}$ is a {\em strength function}. An example of such a StrAF is depicted at Figure~\ref{fig:exemple-straf-background}, where nodes represent arguments, edges represent attacks, and the numbers close to the nodes represent the arguments strength.
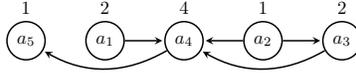
\begin{figure}[ht]
\centering
\scalebox{0.7}{
\begin{tikzpicture}[->,>=stealth,shorten >=1pt,auto,node distance=1.5cm, thick,main node/.style={circle,draw,font=\bfseries},scale=1]
\node[main node,label=above:{$2$}] (A1) {$a_1$};
\node[main node,label=above:{$4$}] (A4) [right of=A1] {$a_4$};
\node[main node,label=above:{$1$}] (A2) [right of=A4] {$a_2$};
\node[main node,label=above:{$2$}] (A3) [right of=A2] {$a_3$};
\node[main node,label=above:{$1$}] (A5) [left of=A1] {$a_5$};

\draw [->] (A1) to (A4);
\draw [->] (A2) to (A4);
\draw [->] (A2) to (A3);
\draw [->] (A3) to[bend left] (A4);
\draw [->] (A4) to[bend left] (A5);

\end{tikzpicture}
} 
\caption{A StrAF Example\label{fig:exemple-straf-background}}
\end{figure}
These strengths intuitively represent the intrinsic robustness associated with an argument and allow to induce a {\em defeat} relation: an argument $a$ defeats another argument $b$ when $a$ attacks $b$ and the strength associated with $b$ does not overcome that with $a$. This framework also offers the notion of collective defeat, {\em i.e.} sets of arguments that can jointly defeat their target while they cannot do so separately. First, we call an {\em accrual} a set of arguments that collectively attack a same target, {\em i.e.} a set $\kappa \subseteq \A$ s.t. $\exists c \in \A$ s.t. $\forall$ $a \in \kappa$, $(a,c) \in \R$. Moreover, we say that $\kappa$ is \emph{an accrual that attacks $c$}. Then, for $\kappa' \subseteq \A$ an accrual, $\kappa$ \emph{attacks} $\kappa'$ iff $\exists a \in \kappa'$ s.t. $\kappa$ attacks $a$.

\begin{example}\label{example:accruals}
Consider again $\straf$ from Figure~\ref{fig:exemple-straf-background}. We can observe several examples of accruals, {\em e.g.} $\kappa_1 = \{a_1, a_2\}$ and $\kappa_2 = \{a_1, a_3\}$, that both attack $a_4$. Notice that any attack $(a_i,a_j) \in \R$ induces an accrual $\{a_i\}$ attacking $a_j$.
\end{example}

We need to assess the collective strength of an accrual.


\begin{definition}[Collective Strength]\label{def:coval-properties}
Let $\straf =\langle \A, \R, \s \rangle$ be a StrAF and $\kappa=\{a_1,...,a_n\} \subseteq \A$ be an accrual. Then the collective strength associated with $\kappa$ is $\coval_\oplus(\kappa) = \oplus(\s(a_1),\dots, \s(a_n))$
where $\oplus$ is an aggregation operator.
\end{definition}

The operator $\oplus$ must satisfy some properties discussed in \cite{RossitMDM21}. An example of suitable operator is $\oplus = \sum$. If $\oplus$ is clear from the context, we simply write $\coval$ for $\coval_\oplus$.

\begin{definition}[Collective Defeat]\label{def:collective-defeat}
Let $\straf =\langle \A, \R, \s \rangle$ be a StrAF, $a \in \A$, and $\oplus$ an aggregation operator. Then, an accrual $\kappa$ \emph{defeats} $a$ with respect to $\coval_\oplus$, denoted by $\kappa \rhd_{\oplus} a$, iff $\kappa \subseteq \A$ is an accrual that attacks $a$ and $\coval_\oplus(\kappa) \geq \s(a)$.
If $\oplus$ is clear from the context, we use $\kappa \rhd a$ instead of $\kappa \rhd_{\oplus} a$.
\end{definition}

In the rest of the paper, we focus on $\oplus = \sum$ in examples and the pseudo-Boolean encoding defined in Section~\ref{section:pb-encoding}. But, unless explicitly stated otherwise, our results remain valid for any $\oplus$ satisfying the properties from \cite{RossitMDM21}.

\begin{definition}\label{def:defeat-of-accrual}
Let $\straf =\langle \A, \R, \s \rangle$ be a StrAF, $\oplus$ an aggregation operator and $\kappa \subseteq \A$, $\kappa' \subseteq \A$ two accruals. Then $\kappa$ \emph{defeats} $\kappa'$, denoted by $\kappa \rhd_\oplus \kappa'$, iff $\exists a \in \kappa'$ s.t. $\kappa \rhd_\oplus a$.
\end{definition}

\begin{example}
Continuing Example~\ref{example:accruals}, notice that $\coval_{\sum}(\kappa_1) = 3 < \s(a_4)$, so $\kappa_1 \not\!\!\!\rhd a_4$. On the contrary, $\coval_{\sum}(\kappa_2) = 4 \geq \s(a_4)$, so $\kappa_2 \rhd a_4$.
\end{example}


StrAF semantics rely on two possible adaptations of the notion of conflict-freeness:

\begin{definition}[Conflict-freeness/Defense] Given $\straf =\langle \A, \R, \s \rangle$ a StrAF, $\oplus$ an aggregation operator, and $S \subseteq \A$,
\begin{itemize}
    \item $S$ is \emph{strongly conflict-free} iff $\nexists a,b \in S$ s.t. $(a,b) \in \R$.
    \item $S$ is \emph{weakly conflict-free} iff there are no accruals $\kappa_1 \subseteq S$ and $\kappa_2 \subseteq S$ s.t. $\kappa_1 \rhd_{\oplus} \kappa_2$.
    \item $S$ defends an element $a \in \A$ iff for all accruals $\kappa_1
      \subseteq \A$, if $\kappa_1 \rhd_{\oplus} a$, then there exists an accrual $\kappa_2
      \subseteq S$ s.t. $\kappa_2 \rhd_{\oplus} \kappa_1$.
\end{itemize}
\end{definition}

Intuitively, strongly conflict-free sets are ``classically" conflict-free, {\em i.e.} there is no attack between two arguments members of such a set. On the contrary, weakly conflict-free sets are ``defeat-free": attacks between arguments are permitted as long as they do not result in a  defeat neither individual nor collective. We use (respectively) $\cf^{\oplus}_S$ and $\cf^{\oplus}_W$ to denote these sets (or simply $\cf_S$ and $\cf_W$ when $\oplus$ is clear from the context).
Then, admissibility and extension-based semantics can be defined either strong or weak. Namely:

\begin{definition}[Semantics for StrAFs \cite{RossitMDM21}]\label{def:acceptability-semantics}
Given $\straf =\langle \A, \R, \s \rangle$ a StrAF, $\oplus$ an aggregation operator, and $S \subseteq \A$ a strong (resp. weak) conflict-free set,
\begin{itemize}
    \item $S$ is a \emph{strong (resp. weak) admissible set} iff $S$ defends all elements of $S$.
    \item $S$ is a \emph{strong (resp. weak) preferred extension} iff
      $S$ is a $\subseteq$-maximal strong (resp. weak) admissible set.
    \item $S$ is a \emph{strong (resp. weak) stable extension} iff $\forall a \in \A \backslash S$, $\exists \kappa \subseteq S$ s.t. $\kappa \rhd_\oplus a$.
\end{itemize}
\end{definition}

For $\sigma$ an extension-based semantics and $X \in \{S,W\}$ meaning respectively {\em strong} and {\em weak}, we use $\sigma^{\oplus}_X(\straf)$ to denote the $X$-$\sigma$ extensions of $\straf$. We drop $\oplus$ from the notation where there is no possible ambiguity.

It is proven in \cite{RossitMDM21} that Dung's AFs are a subclass of StrAFs, where strong and weak semantics coincide. This result is useful for proving complexity results. However, in~\cite{RossitMDM21} authors only focus on complexity issues for the (weak and strong) stable semantics.

\section{Admissibility-based Semantics for StrAFs}\label{section:admissibility}
In our study, we investigate computational issues for admissibility-based semantics. A formal definition of (weak or strong) complete semantics is missing in \cite{RossitMDM21}, but matching the definition of (weak or strong) admissibility with the classical definition of the complete semantics, a straightforward definition can be stated as follows:

\begin{definition}\label{def:complete-grounded}
Let $\straf = \langle \A, \R, \s\rangle$ be a StrAF, and $\oplus$ an aggregation operator. A strong (resp. weak) admissible set $S \subseteq \A$ is a {\em strong (resp. weak) complete extension} of $\straf$ if $S$ contains all the arguments that it defends.
\end{definition}

Now we study these semantics and in particular, we show that surprisingly this intuitive definition of the complete semantics based on strong admissibility fails to satisfy a desirable property, namely the Fundamental Lemma, which states that admissible sets can be extended by the arguments that they defend. This leads us to redefine strong admissibility (and the associated complete and preferred semantics) in Section~\ref{subsection:strong-admissibility}. On the contrary, the definition of weak admissibility is proved to be suitable in Section~\ref{subsection:weak-admissibility}.

\subsection{Revisiting Strong Admissibility}\label{subsection:strong-admissibility}
First, we observe that the usual inclusion relation between the preferred and complete semantics is not satisfied for the strong semantics of StrAFs. Moreover, the universal existence of complete extensions does not hold either.

\begin{proposition}
There exists $\straf$ s.t. $\pr_S(\straf) \nsubseteq \co_S(\straf)$, and $\co_S(\straf) = \emptyset$.
\end{proposition}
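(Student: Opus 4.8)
The plan is to prove both claims at once by exhibiting a single minimal StrAF, so I would start by constructing the smallest possible witness. Take $\straf = \langle \A, \R, \s\rangle$ with $\A = \{a_1, a_2\}$, $\R = \{(a_1, a_2)\}$, strengths $\s(a_1) = 1$ and $\s(a_2) = 2$, and $\oplus = \sum$. The design idea is to have a single attack that is \emph{too weak to defeat its target}: the only accrual attacking $a_2$ is $\{a_1\}$, and since $\coval(\{a_1\}) = 1 < 2 = \s(a_2)$ we have $\{a_1\} \not\rhd a_2$. As this is the only attack, the first key step is to observe that \emph{no accrual defeats any argument} in $\straf$.

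From this observation the defense condition becomes vacuous: for every $S \subseteq \A$ and every $a \in \A$ there is no accrual $\kappa_1$ with $\kappa_1 \rhd a$, hence $S$ trivially defends $a$. In particular $\emptyset$, $\{a_1\}$ and $\{a_2\}$ each defend both $a_1$ and $a_2$. The next step is to enumerate the strongly conflict-free sets: $\emptyset$, $\{a_1\}$ and $\{a_2\}$ are strongly conflict-free, whereas $\{a_1, a_2\}$ is not, because $(a_1, a_2) \in \R$. Combining vacuous defense with strong conflict-freeness, the strong admissible sets are exactly $\emptyset$, $\{a_1\}$ and $\{a_2\}$, and the two singletons are $\subseteq$-maximal, so $\pr_S(\straf) = \{\{a_1\}, \{a_2\}\}$.

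It then remains to compute $\co_S(\straf)$. Since both $a_1$ and $a_2$ are defended by every set, any strong complete extension, being closed under the arguments it defends, would have to contain both $a_1$ and $a_2$; but $\{a_1, a_2\}$ is not strongly conflict-free, so no strong admissible (a fortiori no strong complete) set contains both. Checking the three admissible sets confirms this: $\emptyset$ and $\{a_1\}$ both defend $a_2$ without containing it, and $\{a_2\}$ defends $a_1$ without containing it. Hence $\co_S(\straf) = \emptyset$, which immediately yields $\pr_S(\straf) \nsubseteq \co_S(\straf)$ since $\pr_S(\straf) \neq \emptyset$.

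The conceptual heart of the argument, and the step I expect to be the only real obstacle, is identifying the right tension to exploit: strong conflict-freeness forbids \emph{every} attack between members, while defense only asks to counter attacks that actually \emph{defeat}. An argument that is attacked but not defeated is therefore defended by everyone yet cannot be placed together with its attacker, and it is precisely this mismatch that simultaneously empties the complete semantics and separates it from the preferred one. Once this pattern of two universally-defended, mutually-conflicting arguments is spotted, the verification above is entirely routine.
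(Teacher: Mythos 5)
Your proof is correct, and it proves the same statement via a genuinely smaller witness than the paper's. The paper uses a three-argument cycle ($a \rightarrow b \rightarrow c \rightarrow a$ with strengths $5,4,3$) in which defeats actually occur: $\{a\}$ defeats $b$ and thereby \emph{actively} defends $c$, yet $\{a,c\}$ fails strong conflict-freeness because of the non-defeating attack $(c,a)$; this makes $\pr_S = \{\{a\}\}$ while no admissible set is complete. Your two-argument example pushes the same attack-versus-defeat mismatch to its degenerate extreme: with no defeats at all, defense is vacuous, every strongly conflict-free set is admissible, and no admissible set can contain both universally-defended arguments. What your version buys is minimality and an essentially computation-free verification; what the paper's version buys is robustness of the diagnosis --- it shows the pathology is not an artifact of vacuous defense but persists when defense is exercised through genuine defeats, and the same example then does double duty in the paper, exhibiting the failure of the Fundamental Lemma ($\{a\}$ admissible, defends $c$, but $\{a\}\cup\{c\}$ not admissible) and illustrating how the revised notion of strong defense (requiring $S \cup \{a\}$ strongly conflict-free) repairs it. Your example would serve those follow-up purposes as well, so the difference is one of emphasis rather than substance.
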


\begin{proof}
The strong conflict-free sets of $\straf$ from Figure~\ref{fig:counter-example-pr-co-strong} are $\cf_S(\straf) = \{\emptyset, \{a\}, \{b\},\linebreak \{c\}\}$. Trivially, $\emptyset$ is strongly admissible (since it is not defeated). Similarly, $\{a\}$ is strongly admissible (it is only attacked by $c$, but not defeated). $\{b\}$ is not strongly admissible ($\{a\} \rhd \{b\}$, and $\{b\}$ does not defend itself), neither $\{c\}$ ($\{b\} \rhd \{c\}$, and $\{c\}$ does not defend itself).
With $\ad_S(\straf) = \{\emptyset, \{a\}\}$, $\{a\}$ is the (only) $\subseteq$-maximal strongly admissible set, {\em i.e.} $\pr_S(\straf) = \{\{a\}\}$. However, $\{a\}$ is not complete: $\{a\}$ defends $c$ against all its defeaters, but does not contain it. Hence $\pr_S(\straf) \nsubseteq \co_S(\straf)$. Moreover, $\emptyset$ is not complete either: $\emptyset$ defends $a$ against all its defeaters. Thus $\co_S(\straf) = \emptyset$.
\end{proof}

\begin{figure}[h]
	\centering
	\scalebox{0.7}{
	\begin{tikzpicture}[->,>=stealth,shorten >=1pt,auto,node distance=1.5cm, thick,main node/.style={circle,draw,font=\bfseries},scale=0.5]
	\node[main node,label=above:{$5$}] (a) {$a$};
	\node[main node,label=above:{$4$}] (b) [right of=a] {$b$};
	\node[main node,label=above:{$3$}] (c) [right of=b] {$c$};

	\path[->] (a) edge (b)
			(b) edge (c)
			(c) edge[bend left] (a);
	\end{tikzpicture}
	} 
	\caption{Example proving that $\pr_S(\straf) \nsubseteq \co_S(\straf)$\label{fig:counter-example-pr-co-strong}}
	\end{figure}

The StrAF from Figure~\ref{fig:counter-example-pr-co-strong} shows that the Fundamental Lemma does not hold for strong semantics of StrAFs: $\{a\}$ is strongly admissible, and defends $c$, but $\{a\} \cup \{c\}$ is not strongly admissible.
A way to solve this issue is to redefine strong admissibility:

\begin{definition}[Strong Semantics Revisited]\label{def:strong-admissibility}\label{def:strong-complete-preferred-revisited}
Let $\straf=\langle \A,\R,  \s\rangle$ be a StrAF and $\oplus$ an aggregation operator. A set $S \in \cf_S(\straf)$ {\em strongly defends} an argument $a$ if $S$ defends $a$ against all the accruals that defeat it, {\em i.e.} $\forall \kappa \subseteq \A$ s.t. $\kappa \rhd a$, $\exists \kappa' \subseteq S$ s.t. $\kappa' \rhd \kappa$, and $S \cup \{a\}$ is strongly conflict-free.
Then, a set $S \subseteq \A$ is {\em strongly admissible} if it is strongly conflict-free and it strongly defends all its elements. Moreover,
\begin{itemize}
    \item $S$ is a \emph{strong preferred extension} iff $S$ is a $\subseteq$-maximal strong admissible set.
    \item $S$ is a \emph{strong complete extension} iff $S$ contains all the arguments that it strongly defends.
\end{itemize}
\end{definition}



If we consider again the StrAF from Figure~\ref{fig:counter-example-pr-co-strong}, observe that this time, the strongly admissible set $\{a\}$ does not strongly defends $c$, since $\{a,c\}$ is not strongly conflict-free. Thus $\{a\}$ is a strong complete extension of this StrAF, following Definition~\ref{def:strong-complete-preferred-revisited}. Now, Dung's Fundamental Lemma can be adapted to strong admissibility.

\begin{lemma}[Fundamental Lemma for Strong Admissibility]\label{lemma:fundamental-strong}
Let $\straf=\langle \A,\R, \s\rangle$ be a StrAF and $\oplus$ an aggregation operator. Let $S \subseteq \A$ be a strongly admissible set, and $a, a'$ two arguments that are strongly defended by $S$ against all their defeaters. Then, $S' = S \cup \{a\}$ is strongly admissible.
\end{lemma}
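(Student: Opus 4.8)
The plan is to unfold the revised Definition~\ref{def:strong-admissibility} into its two constituent requirements and verify each of them directly for $S' = S \cup \{a\}$: that $S'$ is strongly conflict-free, and that $S'$ strongly defends every one of its elements. The one structural fact I would lean on throughout is the inclusion $S \subseteq S'$, which gives monotonicity of defence: any defeating accrual $\kappa' \subseteq S$ is equally an accrual contained in $S'$. This is what allows every defence already established by $S$ to carry over to $S'$ without change.

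First I would establish that $S'$ is strongly conflict-free. This is the genuinely new ingredient relative to Dung's setting, but the revised definition hands it to us almost immediately: by hypothesis $S$ strongly defends $a$, and the definition of strong defence \emph{requires} $S \cup \{a\}$ to be strongly conflict-free as part of that statement. Since $S' = S \cup \{a\}$, this is exactly the claim. It is worth stressing that this is precisely the clause absent from the pre-revision definition, whose omission is what broke the Fundamental Lemma on the StrAF of Figure~\ref{fig:counter-example-pr-co-strong}.

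Next I would show that $S'$ strongly defends each $x \in S'$, i.e.\ verify the two conditions of strong defence for the pair $(S', x)$. The conflict-freeness condition, namely that $S' \cup \{x\}$ be strongly conflict-free, collapses to $S'$ being strongly conflict-free because $x \in S'$, so it is already covered by the first step. For the counter-accrual condition I would split on whether $x \in S$ or $x = a$. If $x \in S$, then since $S$ is strongly admissible it strongly defends $x$, so for every accrual $\kappa$ with $\kappa \rhd x$ there is some $\kappa' \subseteq S$ with $\kappa' \rhd \kappa$; as $S \subseteq S'$, the same $\kappa'$ witnesses defence by $S'$. If $x = a$, the identical conclusion follows from the hypothesis that $S$ strongly defends $a$. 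In either case every defeater of $x$ is itself defeated by an accrual lying inside $S'$, which completes the verification and hence the proof that $S'$ is strongly admissible.

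The only real subtlety — the ``obstacle'', such as it is — is conceptual rather than computational: one must keep the two clauses of strong defence cleanly separated and recognise that, unlike in Dung's lemma where monotonicity of defence is the crux, here defence transfers trivially via $S \subseteq S'$ and all the force of the statement is absorbed into the strong-conflict-freeness clause built into strong defence. I would finally note that the second defended argument $a'$ does not enter the stated conclusion about $S'$ at all; it is listed only to mirror the classical two-part formulation. Indeed one should \emph{not} expect $a'$ to remain strongly defended by $S'$ in general, since $a$ and $a'$ may attack one another and thereby spoil strong conflict-freeness of $S' \cup \{a'\}$, which is exactly why the conclusion is phrased solely as ``$S'$ is strongly admissible''.
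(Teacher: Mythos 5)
Your proof is correct and follows essentially the same route as the paper's own (much terser) proof: both unfold the revised definition of strong defence, note that strong conflict-freeness of $S \cup \{a\}$ is built directly into the hypothesis that $S$ strongly defends $a$, and observe that the counter-accrual witnesses in $S$ carry over to $S' \supseteq S$. Your write-up simply makes explicit the case split and the collapse of the conflict-freeness clause that the paper leaves implicit.
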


 \begin{proof}
 The proof follows the definitions of strong defense and strong admissibility: since $S$ strongly defends all its elements, and strongly defends $a$, then $S' = S \cup \{a\}$ is strongly conflict-free and strongly defends all its elements, hence it is strongly admissible. 
 \end{proof}


Lemma~\ref{lemma:fundamental-strong} implies a relation between strong preferred and complete extensions:

\begin{proposition}\label{prop:inclusion-pr-co-strong}
For any $\straf$ and $\oplus$, $\pr^\oplus_S(\straf) \subseteq \co^\oplus_S(\straf)$.
\end{proposition}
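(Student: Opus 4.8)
The plan is to show the standard containment $\pr^\oplus_S(\straf) \subseteq \co^\oplus_S(\straf)$ by proving that every strong preferred extension is a strong complete extension. Since both notions build on strong admissibility, and a strong complete extension is by Definition~\ref{def:strong-complete-preferred-revisited} a strongly admissible set that contains all arguments it strongly defends, it suffices to take an arbitrary $S \in \pr^\oplus_S(\straf)$ and establish two facts: that $S$ is strongly admissible, and that $S$ contains every argument it strongly defends.

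The first fact is immediate from the definitions: by Definition~\ref{def:strong-complete-preferred-revisited}, a strong preferred extension is a $\subseteq$-maximal strong admissible set, so in particular it is strongly admissible. The crux is the second fact, and here is where Lemma~\ref{lemma:fundamental-strong} does the work. First I would argue by contradiction: suppose $S$ strongly defends some argument $a$ but $a \notin S$. Since $S$ is strongly admissible, it strongly defends all its elements, and by hypothesis it also strongly defends $a$; Lemma~\ref{lemma:fundamental-strong} then tells us that $S' = S \cup \{a\}$ is strongly admissible. But $a \notin S$ means $S \subsetneq S'$, contradicting the $\subseteq$-maximality of $S$ among strongly admissible sets. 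Hence $S$ must contain every argument it strongly defends, which is precisely the defining condition of a strong complete extension.

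Putting the two facts together, $S$ is a strongly admissible set containing all the arguments it strongly defends, so $S \in \co^\oplus_S(\straf)$. Since $S$ was an arbitrary element of $\pr^\oplus_S(\straf)$, the inclusion follows.

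I do not expect any serious obstacle here, as this is the routine ``preferred implies complete'' argument transported to the revisited strong setting. The only point requiring care is to ensure that Lemma~\ref{lemma:fundamental-strong} applies cleanly: the lemma is stated for arguments that are strongly defended by $S$ \emph{against all their defeaters}, so I would confirm that ``$S$ strongly defends $a$'' in the sense of Definition~\ref{def:strong-complete-preferred-revisited} is exactly the hypothesis the lemma requires (it is, since strong defense already quantifies over all accruals $\kappa \rhd a$ and also demands that $S \cup \{a\}$ be strongly conflict-free). This is precisely the property that the revised definition of strong admissibility was engineered to guarantee, which is why the proposition holds for the new definition even though the analogous inclusion failed for the original one.
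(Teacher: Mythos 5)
Your proposal is correct and follows essentially the same route as the paper's own proof: both argue by contradiction that a strong preferred extension $S$ strongly defending some $a \notin S$ would, via Lemma~\ref{lemma:fundamental-strong}, yield a strictly larger strongly admissible set $S \cup \{a\}$, contradicting $\subseteq$-maximality. Your closing remark checking that the lemma's hypothesis coincides with the notion of strong defense in Definition~\ref{def:strong-complete-preferred-revisited} is a sensible extra precaution, but it does not alter the argument.
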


 \begin{proof}
 Reasoning with a proof by contradiction, suppose that there is a strong preferred extension $S$ of $\straf$, which is not a strong complete extension. Since $S$ is strongly admissible, it means that it strongly defends an argument $a \in \A \setminus S$. According to Lemma~\ref{lemma:fundamental-strong}, $S \cup \{a\}$ is strongly admissible. Thus we have a strongly admissible set $S' \supset S$, which contradicts the fact that $S$ is a strong preferred extension ({\em i.e.} a $\subseteq$-maximal strong admissible set). This concludes the proof that $S$ is a strong complete extension.
 \end{proof}

This guarantees the existence of at least one strong complete extension for any StrAF: since $\emptyset$ is a strong admissible set for any $\straf$, then $\straf$ admits some $\subseteq$-maximal strong admissible sets, {\em i.e.} $\pr_S(\straf) \neq \emptyset$, which implies $\co_S(\straf) \neq \emptyset$.

\begin{example}
Let us consider again the StrAF provided by Figure~\ref{fig:exemple-straf-background}. 
Its strongly admissible sets are $\ad_S(\straf) = \{\emptyset, \{a_1\}, \{a_2\}, \{a_3\}, \{a_1,a_2\}, \{a_1, a_3, a_5\}\}$. Then, the strong preferred and complete extensions are $\pr_S(\straf) = \co_S(\straf) = \{\{a_1,a_2\}, \{a_1, a_3,a_5\}\}$.
\end{example}

Finally, we prove that the new definition of strong admissibility does not change the fact that strong stable extensions are strongly admissible (and even strong preferred).

\begin{proposition}\label{prop:strong-stable-preferred}
For any $\straf$ and $\oplus$, $\stb^\oplus_S(\straf) \subseteq \pr^\oplus_S(\straf)$.
\end{proposition}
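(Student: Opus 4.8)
The plan is to show that every strong stable extension is in fact a strong preferred extension, proceeding in two stages: first establishing that a strong stable extension $S$ is strongly admissible, and then that it is $\subseteq$-maximal among strongly admissible sets. For the first stage, let $S \in \stb^\oplus_S(\straf)$. By definition of the strong stable semantics (Definition~\ref{def:acceptability-semantics}), $S$ is strongly conflict-free and for every $a \in \A \setminus S$ there is an accrual $\kappa \subseteq S$ with $\kappa \rhd_\oplus a$. I would use this defeating property to verify strong defense (Definition~\ref{def:strong-admissibility}): given any argument $b \in S$ and any accrual $\kappa_1 \rhd_\oplus b$, I must exhibit $\kappa' \subseteq S$ with $\kappa' \rhd_\oplus \kappa_1$. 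Since $S$ is strongly conflict-free and $\kappa_1$ attacks (hence defeats) an element of $S$, at least one element $c$ of $\kappa_1$ must lie outside $S$; by strong stability $S$ contains an accrual defeating that $c$, which gives the required $\kappa' \subseteq S$ with $\kappa' \rhd_\oplus \kappa_1$. Because $S$ is already strongly conflict-free, $S \cup \{b\} = S$ is trivially strongly conflict-free for each $b \in S$, so $S$ strongly defends all its elements and is therefore strongly admissible.

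For the second stage, I would argue maximality by contradiction. Suppose $S$ is not $\subseteq$-maximal among strongly admissible sets, so there is a strongly admissible $T \supsetneq S$ and some $a \in T \setminus S$. Since $a \notin S$ and $S$ is strongly stable, there is an accrual $\kappa \subseteq S \subseteq T$ with $\kappa \rhd_\oplus a$. This means $\kappa$ and $\{a\}$ are both subsets of $T$ (note $\{a\}$ is an accrual since $\kappa$ attacks $a$), and $\kappa \rhd_\oplus \{a\}$, which contradicts the weak conflict-freeness condition — but here I must be careful, since strong admissibility uses strong conflict-freeness, not weak. The cleaner route is to observe that $\kappa \subseteq S \subseteq T$ attacks $a \in T$, so there exist two arguments in $T$ (an element of $\kappa$ and $a$) joined by an attack in $\R$, directly contradicting that $T$ is strongly conflict-free.

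The main obstacle I anticipate is the maximality argument rather than the admissibility one, because it hinges on the precise interplay between the strong stable condition (phrased via collective defeat) and strong conflict-freeness (phrased via individual attacks). I must make sure that the accrual $\kappa \subseteq S$ defeating $a$ actually yields an individual attack witnessing a violation of strong conflict-freeness: by the definition of accrual, every element of $\kappa$ attacks $a$ in $\R$, so picking any single $a_i \in \kappa \subseteq T$ gives $(a_i, a) \in \R$ with both endpoints in $T$, which is exactly what strong conflict-freeness of $T$ forbids. Once this is pinned down, the contradiction is immediate and maximality follows. I would then conclude that $S$ is a $\subseteq$-maximal strongly admissible set, i.e. $S \in \pr^\oplus_S(\straf)$, completing the inclusion.
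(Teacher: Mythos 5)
Your proof is correct and follows essentially the same two-stage route as the paper's: first using strong conflict-freeness to force any accrual defeating an element of $S$ to lie outside $S$, so that strong stability supplies a defeater in $S$, and then deriving a contradiction with the strong conflict-freeness of any strictly larger strongly admissible set. The only cosmetic difference is that the paper observes that \emph{all} elements of the defeating accrual lie outside $S$ and defeats each of them, whereas you defeat a single element --- both suffice, since by Definition~\ref{def:defeat-of-accrual} defeating one element of an accrual defeats the accrual.
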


 \begin{proof}
 Let $S \in \stb_S(\straf)$ be a strong stable extension. By definition, $S$ is strongly conflict-free. Let us prove that it is strongly admissible, {\em i.e.} it strongly defends all its elements. Since it is strongly conflict-free, we only need to prove that it (classically) defends all its elements. Indeed, given $a \in S$, the fact that $S \cup \{a\}$ is strongly conflict-free is obvious. Let $\kappa$ be an accrual that defeats $a$. The strong conflict-freeness of $S$ implies that there is no $b \in \kappa$ belonging to $S$, {\em i.e.} $\kappa \subseteq \A \setminus S$. Then, since $S$ defeats all the arguments in $\A \setminus S$ (because it is a strongly stable extension), $S$ defeats all the arguments in $\kappa$. So $a$ is strongly defended. Which proves that $S$ is admissible.

 Now, prove that it is a strongly preferred extension. Assume that it is not, {\em i.e.} $\exists S' \in \ad_S(\straf)$ s.t. $S \subset S'$. This implies the existence of an argument $b \in S' \setminus S$. Since $S$ is strongly stable, there is some accrual $\kappa \subseteq S$ that defeats $b$. This implies that $S'$ is not strongly conflict-free, which contradicts the assumption that $S'$ is a strongly admissible set. So we conclude that no such $S'$ exists, {\em i.e.} $S$ is strongly preferred.
 \end{proof}

\subsection{Properties of the Weak Semantics}\label{subsection:weak-admissibility}
Regarding now weak semantics as defined in \cite{RossitMDM21}, the usual result still holds for StrAFs.

\begin{lemma}[Fundamental Lemma for Weak Admissibility]\label{lemma:fundamental-weak}
Let $\straf=\langle \A,\R, \s\rangle$ be a StrAF and $\oplus$ an aggregation operator. Given $S \subseteq \A$ a weakly admissible set, and $a, a'$ two arguments that are defended by $S$,
\begin{enumerate}
	\item $S' = S \cup \{a\}$ is weakly admissible,
	\item $S'$ defends $a'$.
\end{enumerate}
\end{lemma}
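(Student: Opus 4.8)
The plan is to separate the statement into two essentially independent ingredients: a \emph{monotonicity} property of defense, and the preservation of weak conflict-freeness when $a$ is added. First I would dispose of the trivial case $a \in S$, where $S' = S$ and both claims hold at once; so I assume $a \notin S$ throughout.

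I would begin by recording the monotonicity of defense: if $S \subseteq T$ and $S$ defends $x$, then $T$ defends $x$. This is immediate from the definition, since any counter-accrual $\kappa_2 \subseteq S$ witnessing the defense also satisfies $\kappa_2 \subseteq T$. This single observation already settles the second claim: as $a'$ is defended by $S$ and $S \subseteq S'$, it is defended by $S'$. It also yields one half of the first claim, namely that $S'$ defends all of its elements: each $x \in S'$ is either an element of $S$ (defended by $S$, since $S$ is weakly admissible) or is $a$ itself (defended by $S$ by hypothesis), and in both situations monotonicity lifts the defense from $S$ to $S'$.

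The remaining, and main, obstacle is to show that $S'$ is weakly conflict-free; this is where the accrual-based setting departs from Dung's classical argument, since conflicts here concern collective defeats rather than single attacks. I would argue by contradiction. If $S'$ is not weakly conflict-free, then unfolding Definitions of weak conflict-freeness and of $\rhd$ between accruals, there is an accrual $\kappa_1 \subseteq S'$ and an argument $b \in S'$ with $\kappa_1 \rhd b$. In either case, $b \in S$ or $b = a$, the set $S$ defends $b$, so there is a counter-accrual $\kappa' \subseteq S$ with $\kappa' \rhd \kappa_1$, i.e.\ $\kappa' \rhd c$ for some $c \in \kappa_1 \subseteq S'$. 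If $c \in S$, then $\kappa' \subseteq S$ defeats $c \in S$, contradicting the weak conflict-freeness of $S$. The delicate point is the case $c = a$, where the counterattack from $S$ lands on the freshly added argument. I would resolve it by invoking the defense of $a$ a \emph{second} time: from $\kappa' \subseteq S$ with $\kappa' \rhd a$ and the fact that $S$ defends $a$, I obtain $\kappa'' \subseteq S$ with $\kappa'' \rhd d$ for some $d \in \kappa' \subseteq S$; since $\kappa' \subseteq S$ and $a \notin S$ we have $d \in S$, so this again contradicts the weak conflict-freeness of $S$. Note that no infinite regress arises precisely because the counter-accrual $\kappa'$ lies entirely inside $S$, forcing $d \in S$ rather than $d = a$. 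As every branch ends in a contradiction, $S'$ is weakly conflict-free; together with the defense property established above, this shows $S'$ is weakly admissible, completing the first claim.
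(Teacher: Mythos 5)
Your proof is correct and follows essentially the same route as the paper's: a proof by contradiction on the weak conflict-freeness of $S \cup \{a\}$, using the defense of the defeated element to obtain a counter-accrual $\kappa' \subseteq S$, and invoking the defense of $a$ a \emph{second} time when that counter-accrual lands on $a$, with monotonicity of defense disposing of the second claim (which the paper dismisses as obvious). If anything, your write-up is slightly more careful than the paper's: its first case silently replaces a defeater $\kappa \subseteq S \cup \{a\}$ of $a$ by one inside $S$, glossing over the possibility $a \in \kappa$, whereas your case analysis on where the counter-defeat lands (in $S$ or on $a$) handles that situation explicitly, and you also spell out the defense half of admissibility that the paper leaves implicit.
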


 \begin{proof}
 \begin{enumerate}
 	\item Suppose that $S$ is weakly admissible and $S$ defends $a$. We need to prove that $S \cup \{a\}$ is weakly conflict-free. Reasoning with a proof by contradiction, suppose it is not the case. We split the reasoning in two parts: $\exists \kappa \subseteq S \cup \{a\}$ s.t. $\kappa \rhd a$, or $\kappa \rhd a'$ with $a' \in E$.
 	\begin{enumerate}
 	\item First case: $\exists \kappa \subseteq S \cup \{a\}$ s.t. $\kappa \rhd a$. It means that there is an accrual $\kappa \subseteq S$ s.t. $\kappa \rhd a$. By hypothesis $S$ defends $a$ against $\kappa$, it means that $\exists \kappa' \subseteq S$ s.t. $\kappa' \rhd \kappa$, thus $S$ is not weakly conflict-free. Contradiction.
 	\item Second case: $\exists \kappa \subseteq S \cup \{a\}$ s.t. $\kappa \rhd a'$ with $a' \in S$. By hypothesis, $S$ defends itself against all its defeaters, so $\exists \kappa' \subseteq S$ s.t. $\kappa' \rhd \kappa$. Since $\kappa \subseteq S \cup \{a\}$, either $\kappa'$ defeats $a$ (thus case (a) applies, and this is a contradiction), or $\kappa'$ defeats some argument from $S$, which is also a contradiction.
 	\end{enumerate}
 	\item Obvious from the fact that $S$ defends $a'$.
 \end{enumerate}
 \end{proof}

\begin{proposition}\label{prop:inclusion-pr-co-weak}
For any $\straf$ and $\oplus$, $\pr^\oplus_W(\straf) \subseteq \co^\oplus_W(\straf)$.
\end{proposition}


Similarly to what we have noticed previously for strong admissibility, $\emptyset$ is weakly admissible for any StrAF. This implies the existence of at least one weak preferred extension, and then one weak complete extension for any StrAF.

\begin{example}
Consider again the StrAF from Figure~\ref{fig:exemple-straf-background}. 
One identifies the weakly admissible sets $\ad_W(\straf) = \ad_S(\straf) \cup \{\{a_1,a_3,a_5\}, \{a_2, a_3\}, \{a_1, a_2, a_3\}, \{a_1,a_2,a_3,a_5\}\}\}$. Then, $\pr_W(\straf) =  \co_W(\straf) = \{\{a_1, a_2, a_3, a_5\}\}$.
\end{example}

Notice that, contrary to the case of stable semantics \cite{RossitMDM21}, we do not have $\co_S(\straf) \subseteq \co_W(\straf)$. This comes from the fact that our strong and weak complete semantics are not based on the same notion of defense. However, we observe that each strong complete extension is included in some weak preferred (and complete) extension. We prove that this is true for any StrAF.

\begin{proposition}[Strong/Weak Semantics Relationship]
Given $\straf = \langle \A, \R, \s \rangle $ and $\oplus$ an aggregation operator, $\forall E \in \co_S(\straf)$, $\exists E' \in \pr_W(\straf)$ s.t. $E \subseteq E'$.
\end{proposition}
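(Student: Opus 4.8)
The plan is to show that every strong complete extension is already a weakly admissible set, and then to enlarge it to a $\subseteq$-maximal weakly admissible set using the finiteness of the framework. The crux is therefore a pair of bridging observations relating the two notions of conflict-freeness and of defense.

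First I would establish that $\cf_S(\straf) \subseteq \cf_W(\straf)$, i.e.\ strong conflict-freeness implies weak conflict-freeness. Indeed, any defeat $\kappa_1 \rhd \kappa_2$ between accruals requires, by Definitions~\ref{def:collective-defeat} and~\ref{def:defeat-of-accrual}, that some $b \in \kappa_1$ attacks some $a \in \kappa_2$. If both accruals were included in a strongly conflict-free set $S$, we would have $b, a \in S$ with $(b,a) \in \R$, contradicting strong conflict-freeness. Hence no such pair of accruals exists inside $S$, so $S$ is weakly conflict-free.

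Second, I would observe that strong defense implies (weak/classical) defense: the condition ``$\forall \kappa \subseteq \A$ s.t.\ $\kappa \rhd a$, $\exists \kappa' \subseteq S$ s.t.\ $\kappa' \rhd \kappa$'' appearing in Definition~\ref{def:strong-admissibility} is exactly the definition of $S$ defending $a$. Combining the two observations, any strongly admissible set is weakly admissible, hence $\ad_S(\straf) \subseteq \ad_W(\straf)$. In particular, taking $E \in \co_S(\straf)$, which is in particular strongly admissible, yields $E \in \ad_W(\straf)$.

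Finally, since $\straf$ is finite, the family of weakly admissible sets containing $E$ is finite and nonempty (it contains $E$). I would select a $\subseteq$-maximal element $E'$ of this family. Any weakly admissible $E'' \supsetneq E'$ would also contain $E$ and thus belong to the family, contradicting the maximality of $E'$; therefore $E'$ is $\subseteq$-maximal among \emph{all} weakly admissible sets, i.e.\ $E' \in \pr_W(\straf)$, with $E \subseteq E'$. Alternatively, one may reach such an $E'$ constructively by repeatedly adding defended arguments, invoking Lemma~\ref{lemma:fundamental-weak} to preserve weak admissibility at each step. By Proposition~\ref{prop:inclusion-pr-co-weak}, $E'$ is moreover a weak complete extension. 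I expect the main obstacle to be the first bridging observation on conflict-freeness, where one must argue carefully that a genuine collective defeat always witnesses an individual attack between two arguments of $S$; the extension step is then routine in the finite setting.
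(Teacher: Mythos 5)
Your proof is correct and follows essentially the same route as the paper's: both establish that every strongly admissible set is weakly admissible (strong conflict-freeness implies weak conflict-freeness, and strong defense implies classical defense), note that $E \in \co_S(\straf)$ is in particular strongly admissible, and then pass to a $\subseteq$-maximal weakly admissible superset to obtain $E' \in \pr_W(\straf)$. The only difference is one of detail: you explicitly prove the attack-witness argument for the conflict-freeness bridge and the finiteness-based maximalization step, both of which the paper asserts without elaboration.
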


 \begin{proof}
 The results holds because each strongly admissible set is also a weakly admissible set. Indeed, $E \in \ad_S(\straf)$ is (by definition) strongly conflict-free, which implies its weak conflict-freeness. It also strongly defends all its elements, which implies that it (classically) defends all its elements. Thus $E \in \ad_W(\straf)$. Then, for any $E \in \co_S(\straf)$, by definition $E \in \ad_S(\straf)$, thus $E \in \ad_W(\straf)$. Since weak preferred extension are $\subseteq$-maximal weakly admissible set, the existence of $E' \in \pr_W(\straf)$ s.t. $E \subseteq E'$ holds.
 \end{proof}

Notice finally that we do not need a counterpart to Proposition~\ref{prop:strong-stable-preferred}: the definition of semantics based on weak admissibility is not modified, so the result from \cite[Proposition 1]{RossitMDM21} still holds in this case.

\subsection{Dung Compatibility}
Previous work on StrAFs showed that this framework generalizes Dung's AF, with a correspondence of StrAF semantics with AF semantics in this case.
Following the new definition of strong admissible sets, one might fear that this property does not hold for strong admissibility-based semantics. However, we show here that it still does, as well as for weak complete semantics. Let us recall the transformation of an AF into a StrAF \cite{RossitMDM21}.

\begin{definition}\label{def:associated-strafs}
Given an argumentation framework $\af = \langle \A, \R\rangle$, the StrAF associated with $\af$ is $\straf_{\af} =\langle \A, \R, \s \rangle$ with $\s$ $(a) = 1, \forall a \in \A$ and $\coval = \sum$.
\end{definition}

\begin{observation}
Since all the arguments have the same strength, $(a,b) \in \R$ implies $\{a\} \rhd b$, thus strong and weak conflict-freeness coincide in $\straf_{\af}$.
\end{observation}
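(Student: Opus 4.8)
The plan is to prove the two assertions in turn, the second reducing to showing that the general inclusion $\cf_S(\straf_{\af}) \subseteq \cf_W(\straf_{\af})$ can be reversed in this particular StrAF. Recall first that in \emph{any} StrAF a collective defeat presupposes an attack (every element of a defeating accrual attacks the target), so a strongly conflict-free set, having no internal attack at all, can harbour no internal defeat; hence $\cf_S \subseteq \cf_W$ holds unconditionally. The content of the observation is therefore the converse inclusion, which is exactly where the hypotheses $\s \equiv 1$ and $\oplus = \sum$ are used.

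For the first assertion, let $(a,b) \in \R$. Taking $b$ as a common target, the singleton $\{a\}$ is an accrual attacking $b$ (as already recorded in Example~\ref{example:accruals}, every attack induces such a singleton accrual). Since all strengths equal $1$, we get $\coval_{\sum}(\{a\}) = \s(a) = 1 = \s(b)$, so $\coval_{\sum}(\{a\}) \geq \s(b)$ and Definition~\ref{def:collective-defeat} yields $\{a\} \rhd b$.

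For the second assertion I would establish the converse inclusion $\cf_W(\straf_{\af}) \subseteq \cf_S(\straf_{\af})$ by contraposition. If $S$ is not strongly conflict-free, there exist $a, b \in S$ with $(a,b) \in \R$; by the first assertion $\{a\} \rhd b$, so $S$ contains an accrual ($\{a\}$) that defeats one of its own elements ($b$), i.e.\ $S$ is not weakly conflict-free. Combined with the unconditional inclusion above, this gives $\cf_S(\straf_{\af}) = \cf_W(\straf_{\af})$.

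The one point requiring care is matching this internal defeat with the formal statement of weak conflict-freeness, which is phrased as a defeat $\kappa_1 \rhd \kappa_2$ between two accruals included in $S$: one sets $\kappa_1 = \{a\}$ and must realise $b$ inside an accrual $\kappa_2 \subseteq S$ so that $\{a\} \rhd b$ witnesses $\kappa_1 \rhd \kappa_2$. This is the main obstacle, and it is where I would be most careful about degenerate cases. Here the situation is benign: since every strength equals $1 > 0$, the empty accrual never defeats (as $\coval_{\sum}(\emptyset) = 0 < 1$) and each singleton attacker is a genuine accrual, so the argument leaves no boundary case unaccounted for.
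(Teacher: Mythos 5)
Your main line is the paper's own: the observation carries its justification inside its statement (unit strengths turn every attack into a defeat, so the two notions of conflict-freeness collapse), and your first assertion together with the two inclusions is a correct expansion of exactly that reasoning.

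The gap is in your last paragraph, at precisely the point you flagged. To contradict the letter of weak conflict-freeness you must exhibit \emph{two} accruals $\kappa_1, \kappa_2 \subseteq S$ with $\kappa_1 \rhd \kappa_2$, and both of your ``benign'' reasons --- the empty accrual never defeats, and every singleton \emph{attacker} is a genuine accrual --- speak only to $\kappa_1 = \{a\}$; neither of them puts $b$ inside an accrual. By the paper's definition, $\{b\}$ is an accrual only if $b$ has an outgoing attack, so consider $\af = \langle \{a,b\}, \{(a,b)\}\rangle$ and $S = \{a,b\}$, where $b$ attacks nothing: the only accruals contained in $S$ are $\emptyset$ (vacuously) and $\{a\}$, and neither defeats the other (there is no self-attack, and $\coval_{\sum}(\emptyset) = 0 < 1$). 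Under the literal two-accrual reading, $S$ is then weakly conflict-free but not strongly conflict-free, so your contraposition --- and indeed the observation itself --- fails. What actually closes this is not a strength argument but a definitional one: the reading of weak conflict-freeness that the paper uses in practice, e.g.\ when unfolding it in the proof of Proposition~\ref{proposition:encoding-weak-cf} as ``there is no $\kappa \subseteq S$ and $a \in S$ s.t.\ $\kappa \rhd a$'', makes the defeated object an \emph{element} of $S$ rather than an accrual. Under that reading, $\{a\} \rhd b$ with $a,b \in S$ immediately refutes weak conflict-freeness and your argument is complete. So you should either adopt that element-based reading explicitly, or keep the literal one and split on whether $b$ has an outgoing attack: if it does, $\kappa_2 = \{b\}$ works; if it does not, the literal definition genuinely breaks and no choice of $\kappa_2$ can repair the step.
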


We also recall useful lemmas from \cite{RossitMDM21}.

\begin{lemma}~\label{lemma:conflictfree-equivalence}
Let $AF = \langle \A, \R\rangle$ be an AF, and $\mathit{StrAF}_{AF} =\langle \A, \R, \s \rangle$ its associated StrAF. The set $S \subseteq \A$ is conflict-free in $AF$ iff it is strongly conflict-free in $\mathit{StrAF}_{AF}$.
\end{lemma}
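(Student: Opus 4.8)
The plan is to prove both directions of the equivalence between conflict-freeness in $\af$ and strong conflict-freeness in $\straf_\af$. First I would recall the definition of the associated StrAF: every argument has strength $1$ and $\oplus = \sum$. The key structural observation, already noted in the excerpt, is that strong conflict-freeness of a set $S$ requires only the \emph{absence of an attack} between two members of $S$, regardless of strengths: $S \in \cf_S(\straf_\af)$ iff $\nexists a,b \in S$ with $(a,b) \in \R$. This is literally the same condition as conflict-freeness in the underlying AF, since $\af$ and $\straf_\af$ share the same $\A$ and $\R$.

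Concretely, I would argue as follows. For the forward direction, suppose $S$ is conflict-free in $\af$, so there are no $a,b \in S$ with $(a,b) \in \R$. By the definition of strong conflict-freeness, this immediately gives $S \in \cf_S(\straf_\af)$, because strong conflict-freeness is defined by exactly the same no-internal-attack condition and does not reference the strength function or the aggregation operator at all. For the converse, suppose $S \in \cf_S(\straf_\af)$; then by definition there is no pair $a,b \in S$ with $(a,b) \in \R$, which is precisely the statement that $S$ is conflict-free in $\af$.

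The upshot is that this lemma is essentially definitional: both notions unfold to the identical syntactic condition $\nexists a,b \in S : (a,b) \in \R$ on the common attack relation $\R$, so the two sets of conflict-free sets coincide. I would present the proof as a short chain of iff-steps rather than two separate inclusions, since each definition reduces directly to the other.

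I do not expect a genuine obstacle here, as the statement is a straightforward unfolding of definitions. The only subtlety worth flagging explicitly is that one must use the \emph{strong} (not weak) notion of conflict-freeness: weak conflict-freeness would depend on the defeat relation $\rhd$ and hence on strengths and $\oplus$, so the clean definitional coincidence relies crucially on strong conflict-freeness ignoring strengths. Making sure the reader sees that strong conflict-freeness is strength-agnostic is the main thing to state clearly, and it is why the lemma holds for \emph{any} strength assignment, in particular the uniform one used in $\straf_\af$.
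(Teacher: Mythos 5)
Your proof is correct and matches the paper's treatment: the paper simply notes that this lemma is ``obvious from the definition of strong conflict-freeness,'' and your unfolding of both definitions to the identical condition $\nexists a,b \in S$ with $(a,b) \in \R$ is exactly that observation, spelled out. Your remark that the equivalence holds for any strength assignment (since strong conflict-freeness is strength-agnostic) is a fine clarification but does not change the argument.
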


Lemma~\ref{lemma:conflictfree-equivalence} is obvious from the definition of strong conflict-freeness. Then Lemma~\ref{lemma:defence-equivalence} is useful for proving Dung Compatibility in the context of weak semantics.

\begin{lemma}\label{lemma:defence-equivalence}
Let $AF = \langle \A, \R\rangle$ be an AF, and $\mathit{StrAF}_{AF} =\langle \A, \R, \s \rangle$ its associated StrAF. The set $S \subseteq \A$ defends the argument $a \in \A$ in $AF$ iff $S \subseteq \A$ defends the argument $a \in \A$ in $\mathit{StrAF}_{AF}$.
\end{lemma}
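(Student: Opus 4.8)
The plan is to prove the biconditional by unwinding the definition of defense in each framework and exploiting the special structure of $\straf_{\af}$, namely that every argument has strength $1$ and $\oplus = \sum$. Under this transformation, the collective strength of an accrual $\kappa$ equals its cardinality $|\kappa|$, which is always at least $1$ whenever $\kappa$ is nonempty. Consequently, for any argument $a$, an accrual $\kappa$ defeats $a$ (i.e. $\kappa \rhd a$) exactly when $\kappa$ attacks $a$, since $\coval(\kappa) = |\kappa| \geq 1 = \s(a)$ holds automatically. The key observation I would establish first is therefore that \emph{defeat and attack coincide in $\straf_{\af}$}: $\kappa \rhd a$ iff $\kappa$ is a nonempty accrual attacking $a$, and in particular the singleton $\{b\}$ defeats $a$ iff $(b,a) \in \R$.

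First I would prove the forward direction. Suppose $S$ defends $a$ in $\af$ in Dung's sense, meaning that for every $b \in \A$ with $(b,a) \in \R$, there exists $c \in S$ with $(c,b) \in \R$. To show $S$ defends $a$ in $\straf_{\af}$, take any accrual $\kappa \subseteq \A$ with $\kappa \rhd a$. By the observation above, $\kappa$ is nonempty and attacks $a$, so pick any $b \in \kappa$; then $(b,a) \in \R$, and Dung-defense gives some $c \in S$ with $(c,b) \in \R$. The singleton accrual $\kappa_2 = \{c\} \subseteq S$ then satisfies $\kappa_2 \rhd b$ (again since attack equals defeat), hence $\kappa_2 \rhd \kappa$ because $b \in \kappa$. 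This is exactly what StrAF-defense requires.

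For the converse, suppose $S$ defends $a$ in $\straf_{\af}$. Let $b \in \A$ be any Dung-attacker of $a$, i.e. $(b,a) \in \R$. Then the singleton $\kappa = \{b\}$ is an accrual that attacks $a$, and by the observation $\kappa \rhd a$. StrAF-defense yields an accrual $\kappa_2 \subseteq S$ with $\kappa_2 \rhd \kappa = \{b\}$, which means $\kappa_2 \rhd b$, i.e. $\kappa_2$ attacks $b$. Hence there is some $c \in \kappa_2 \subseteq S$ with $(c,b) \in \R$, which establishes Dung-defense of $a$ by $S$ against the attacker $b$. Since $b$ was arbitrary, $S$ defends $a$ in $\af$.

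I do not expect a genuine obstacle here; the main thing to get right is the reduction of defeat to attack in the uniform-strength setting, and the careful handling of the quantifier structure of StrAF-defense, which speaks about accruals and the defeat relation $\rhd$ between accruals rather than about individual attacks. The one subtlety worth flagging is that StrAF-defense quantifies over \emph{all} accruals $\kappa$ attacking $a$, not just singletons; the forward direction must therefore cope with arbitrary defeating accruals, which is handled by extracting a single element $b \in \kappa$ and defeating it with a singleton from $S$, since defeating any one element of $\kappa$ suffices for $\kappa_2 \rhd \kappa$ by Definition~\ref{def:defeat-of-accrual}.
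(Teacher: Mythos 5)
Your proof is correct and self-contained; note that for this particular lemma the paper gives no proof of its own (it defers to \cite{RossitMDM21}), but your argument is essentially the same technique the paper uses to prove the stronger Lemma~\ref{lemma:strong-defence-equivalence} on strong defense. In both, the key step is exactly your preliminary observation that unit strengths and $\oplus = \sum$ make defeat coincide with attack by a nonempty accrual, so that singleton accruals $\{c\} \subseteq S$ suffice to counter arbitrary defeating accruals in one direction, and singleton defeaters $\{b\}$ recover the Dung attackers in the other.
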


See \cite{RossitMDM21} for the proof of this Lemma. We can also state a stronger version, that will be useful  for proving that Dung Compatibility holds with the revisited definition of strong admissibility.

\begin{lemma}\label{lemma:strong-defence-equivalence}
Let $AF = \langle \A, \R\rangle$ be an AF, and $\mathit{StrAF}_{AF} =\langle \A, \R, \s \rangle$ its associated StrAF. The set $S \subseteq \A$ defends the argument $a \in \A$ in $AF$ iff $S \subseteq \A$ strongly defends the argument $a \in \A$ in $\mathit{StrAF}_{AF}$.
\end{lemma}

\begin{proof}
Let us suppose that $S \subseteq \A$ defends the argument $a \in \A$ in $AF$. This means that for each $b \in \A$ such that $(b,a) \in \R$, $\exists c \in S$ such that $(c,b) \in \R$. Moreover, from the Fundamental Lemma by \cite{Dung95}, $S \cup \{a\}$ is admissible, hence it is conflict-free. This implies the strong conflict-freeness of $S \cup \{a\}$.
Now, let us consider an accrual $\kappa_1 \subseteq \A$ such that $\kappa_1 \rhd a$. As established previously, $\forall b \in \kappa_1$, $\exists c \in S$ such that $(c,b) \in \R$, {\em i.e.} $\exists \kappa_2 = \{c\} \subseteq S$ such that $\kappa_2$ attacks $b$. Since $\coval(\kappa_2) = \s(c) = 1 = \s(b)$, $\kappa_2 \rhd b$  and thus $\kappa_2 \rhd \kappa_1$. So $S$ strongly defends the argument $a$ in $\mathit{StrAF}_{AF}$.

Now we suppose that $S \subseteq \A$ strongly defends the argument $a \in \A$ in $\mathit{StrAF}_{AF}$, {\em i.e.} $S \cup \{a\}$ is strongly conflict-free, and for all accruals $\kappa_1$ that defeat $a$, $\exists \kappa_2 \subseteq S$ such that $\kappa_2 \rhd \kappa_1$. Since all arguments strengths are equal to $1$, every argument $b \in \A$ attacking $a$ corresponds to an accrual $\kappa_1 = \{b\}$ defeating $a$. So $\exists \kappa_2 \subseteq S$ such that $\kappa_2 \rhd \kappa_1 = \{b\}$, and thus $\exists c \in \kappa_2 \subseteq S$ such that $(c,b) \in \R$. So we conclude that $S$ defends the argument $a$ in $AF$.
\end{proof}

Now we can state the following proposition, that extends Dung Compatibility from \cite{RossitMDM21} to the semantics studied in this paper.

\begin{proposition}[Dung Compatibility] \label{prop:dung-compatibility2} Let $\af = \langle \A, \R\rangle$ be an AF, and $\straf_{\af} = \langle \A, \R, \s \rangle$ from Def.~\ref{def:associated-strafs}. For $\sigma \in \{\ad,\pr,\co\}$, $\sigma(\af) = \sigma_X(\straf_{\af})$, for $X \in \{S,W\}$.
\end{proposition}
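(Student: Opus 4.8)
The plan is to prove the three equalities by reducing each StrAF semantics, for both $X=S$ and $X=W$, to its Dung counterpart through the conflict-freeness and defense equivalences already available. The key simplification is the Observation that strong and weak conflict-freeness coincide in $\straf_\af$: combined with Lemma~\ref{lemma:conflictfree-equivalence}, this means that a set $S \subseteq \A$ is conflict-free in $\af$ if and only if it is strongly conflict-free if and only if it is weakly conflict-free in $\straf_\af$. I would establish the admissibility case first, then derive the preferred and complete cases from it.

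For $\sigma = \ad$, I would argue separately on the notion of defense used by each version but show both collapse to Dung defense. For $X = S$: by definition $S \in \ad_S(\straf_\af)$ iff $S$ is strongly conflict-free and strongly defends every element of $S$; applying Lemma~\ref{lemma:conflictfree-equivalence} to the first condition and Lemma~\ref{lemma:strong-defence-equivalence} to the second yields exactly ``$S$ is conflict-free and defends all its elements in $\af$'', i.e.\ $S \in \ad(\af)$. For $X = W$: $S \in \ad_W(\straf_\af)$ iff $S$ is weakly conflict-free and (classically) defends every element; the Observation reduces weak conflict-freeness to conflict-freeness in $\af$, and Lemma~\ref{lemma:defence-equivalence} reduces the defense condition, again giving $S \in \ad(\af)$. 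Hence $\ad(\af) = \ad_S(\straf_\af) = \ad_W(\straf_\af)$ as collections of subsets of the common argument set $\A$.

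The preferred case is then immediate: since $\pr$, $\pr_S$, and $\pr_W$ are defined as the $\subseteq$-maximal elements of $\ad$, $\ad_S$, and $\ad_W$ respectively, and these three collections are literally equal, their $\subseteq$-maximal members coincide, giving $\pr(\af) = \pr_S(\straf_\af) = \pr_W(\straf_\af)$. For $\sigma = \co$, a complete extension is an admissible set that contains every argument it defends (strongly defends, for $X=S$). Starting from the admissibility equivalence just proven and applying Lemma~\ref{lemma:strong-defence-equivalence} (resp.\ Lemma~\ref{lemma:defence-equivalence}) to translate the fixpoint condition ``$S$ contains all the arguments it strongly defends'' (resp.\ ``defends''), the condition matches the Dung complete condition in $\af$. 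This yields $\co(\af) = \co_S(\straf_\af) = \co_W(\straf_\af)$.

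The main delicate point is bookkeeping rather than any hard argument: one must apply the correct defense-equivalence lemma to each version ($X=S$ uses strong defense via Lemma~\ref{lemma:strong-defence-equivalence}, $X=W$ uses classical defense via Lemma~\ref{lemma:defence-equivalence}) and invoke the Observation at exactly the point where weak conflict-freeness appears, so that both conflict-freeness notions are correctly identified with Dung conflict-freeness. Care is also needed with the complete semantics, where the definition combines admissibility with the fixpoint condition on defense; once the admissibility collections are shown equal, the remaining work is simply rewriting the fixpoint condition through the appropriate lemma, with no additional obstruction expected.
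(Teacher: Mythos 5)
Your proof is correct and follows essentially the same route as the paper's: establish the admissibility equivalence from Lemma~\ref{lemma:conflictfree-equivalence} together with Lemma~\ref{lemma:strong-defence-equivalence} (strong case) or Lemma~\ref{lemma:defence-equivalence} (weak case), then get the preferred case by $\subseteq$-maximality and the complete case by translating the ``contains all arguments it defends'' condition through the appropriate defense lemma. The only difference is cosmetic: you prove the weak admissibility and weak preferred cases directly (via the Observation on conflict-freeness and Lemma~\ref{lemma:defence-equivalence}), whereas the paper simply cites \cite{RossitMDM21} for those two cases, so your version is marginally more self-contained but substantively identical.
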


\begin{proof}
From Lemmas~\ref{lemma:conflictfree-equivalence} and~\ref{lemma:strong-defence-equivalence}, $S$ is admissible in $\af$ iff $S$ is strongly admissible in $\straf_{\af}$. So the $\subseteq$-maximal admissible sets in $\af$ coincide with the $\subseteq$-maximal strong admissible sets in $\straf_{\af}$, hence Dung Compatibility for strong preferred semantics. For strong complete semantics, notice that an admissible set $S$ defends (in $\af$) an argument $a \in \A \setminus S$ iff $S$ strongly defends $a$ in $\straf_{\af}$, thus $S$ is a complete extension of $\af$ iff it is a strong complete extension of $\straf_{\af}$, and similarly for weak complete semantics using Lemma~\ref{lemma:defence-equivalence}. Results for weak admissibility and weak preferred semantics come from \cite{RossitMDM21}.
\end{proof}

\section{Complexity and Algorithms}\label{section:complexity}
Now we provide some insight on computational issues for admissibility-based semantics of StrAFs, {\em i.e.} we identify the computational complexity of several classical reasoning problems under these semantics, and we provide algorithms (based on pseudo-Boolean encoding) for solving them. While the complexity results are generic regarding the choice of $\oplus$, the algorithms focus on $\oplus = \sum$.

\subsection{Complexity Analysis}
We assume that the reader is familiar with basic notions of complexity, and otherwise we refer the interested reader to \cite{HOFAComplexity} for details on complexity in formal argumentation, and \cite{AroraB2009} for a more general overview of computational complexity.
We focus on three classical reasoning problems in abstract argumentation, namely {\em verification} (``Is a given set of arguments an extension?"), {\em credulous acceptability} (``Is a given argument member of some extension?") and {\em skeptical acceptability} (``Is a given argument member of each extension?"). Formally, for $\sigma \in \{\ad,\pr,\co\}$ and $X \in \{S,W\}$:
\begin{itemize}
    \item $\sigma$-$X$-Ver: Given $\straf=\langle \A,\R, \s\rangle$ and $S \subseteq \A$, is $S$ a member of $\sigma_X(\straf)$?
    \item $\sigma$-$X$-Cred: Given $\straf=\langle \A,\R, \s\rangle$ and $a \in \A$, is $a$ in some $S \in \sigma_X(\straf)$?
    \item $\sigma$-$X$-Skep: Given $\straf=\langle \A,\R, \s\rangle$ and $a \in \A$, is $a$ in each $S \in \sigma_X(\straf)$?
\end{itemize}

We recall that these reasoning problems are already considered only for the (weak and strong) stable semantics in \cite{RossitMDM21}.
In the following, we assume a fixed $\oplus$, that can be computed in polynomial time. This is not a very strong assumption, since it is the case with the classical aggregation operators ({\em e.g.} $\sum, \max,\dots$).
Proposition~\ref{prop:dung-compatibility2} implies that the complexity of reasoning with standard AFs provides a lower bound complexity of reasoning with StrAFs. So we focus on identifying upper bounds.

\begin{proposition}[Verification]\label{proposition:complexity-verification}
For $X \in \{S,W\}$, $\sigma$-$X$-Ver $\in P$, for $\sigma \in \{\ad,\co\}$, and $\pr$-$X$-Ver is $\coNP$-complete.
\end{proposition}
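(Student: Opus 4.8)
The plan is to handle the four polynomial-membership claims ($\ad$-$X$-Ver and $\co$-$X$-Ver in $P$, for $X \in \{S,W\}$) and the two $\coNP$-completeness claims ($\pr$-$X$-Ver) separately. All the real work sits in showing that the primitive predicates ``$S$ is strongly/weakly conflict-free'' and ``$S$ (strongly) defends $a$'' can each be decided in polynomial time; the completeness results then follow by straightforward composition, and the preferred case by an additional quantifier. The obstacle is that weak conflict-freeness and defense quantify over accruals, of which there may be exponentially many. My first move is to invoke the monotonicity of $\oplus$ from \cite{RossitMDM21} (enlarging an accrual never decreases its collective strength), which collapses each such quantifier into a single test on a canonical maximal accrual.

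Concretely, strong conflict-freeness is a scan of $\R$. For weak conflict-freeness I would test, for each candidate target $a \in S$, only the largest relevant accrual $S_a = \{b \in S : (b,a) \in \R\}$, checking whether $\coval(S_a) \geq \s(a)$; monotonicity guarantees that if no maximal accrual defeats $a$ then no sub-accrual does, so iterating over $a \in S$ settles weak conflict-freeness in polynomial time. For defense I would first compute in polynomial time the set $D_S$ of all arguments defeated by $S$ (again via the maximal-accrual test, now drawing attackers from $S$). The key reformulation is then: $S$ defends $a$ iff every accrual defeating $a$ intersects $D_S$, equivalently $S$ fails to defend $a$ iff the accrual $U_a$ of all attackers of $a$ lying outside $D_S$ is by itself strong enough, i.e. $\coval(U_a) \geq \s(a)$. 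A single test per argument thus decides defense. Strong defense of $a$ adds only the polynomial check that $S \cup \{a\}$ is strongly conflict-free.

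Given these primitives, the membership results are immediate: $\ad$-$X$-Ver checks the appropriate conflict-freeness of $S$ together with (strong) defense of each element of $S$; $\co$-$X$-Ver additionally checks, for every $a \in \A \setminus S$, that $S$ does not (strongly) defend $a$. Each is a polynomial number of polynomial tests, so $\sigma$-$X$-Ver $\in P$ for $\sigma \in \{\ad,\co\}$.

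For $\pr$-$X$-Ver I would prove $\coNP$-completeness in two parts. For membership, the complementary problem ``$S$ is not $X$-preferred'' is in $\NP$: either $S$ is not $X$-admissible (decidable in polynomial time by the above), or one can guess a witness $S' \supsetneq S$ and verify in polynomial time that $S'$ is $X$-admissible; hence $\pr$-$X$-Ver $\in \coNP$. For hardness, I would appeal to the Dung Compatibility of Proposition~\ref{prop:dung-compatibility2}: the map $\af \mapsto \straf_{\af}$ is computable in polynomial time and satisfies $\pr(\af) = \pr_X(\straf_{\af})$, so it reduces the standard preferred-verification problem, known to be $\coNP$-complete \cite{HOFAComplexity}, to $\pr$-$X$-Ver. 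The main difficulty of the whole proposition is concentrated in the monotonicity-based collapse of the accrual quantifiers in the second paragraph; once that reduction is in place, everything else is routine.
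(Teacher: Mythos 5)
Your proposal is correct and follows essentially the same route as the paper: polynomial-time checks of conflict-freeness and (strong) defense give membership in $P$ for $\ad$-$X$-Ver and $\co$-$X$-Ver, and for $\pr$-$X$-Ver one gets $\coNP$ membership by guessing an admissible proper superset $S' \supset S$, with hardness transferred from standard AFs via Dung compatibility (Proposition~\ref{prop:dung-compatibility2}). The only difference is one of rigor: where the paper simply asserts that the relevant accruals ``can be identified in polynomial time by checking the attackers of $S$'', you make the key step explicit --- the monotonicity of $\oplus$ collapses the exponential quantification over accruals to a single test on the maximal accrual ($S_a$, $D_S$, $U_a$) --- which fills in precisely what the paper's proof leaves implicit.
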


 \begin{proof}
 Let us start with strong (resp. weak) admissible sets. Given $S \subseteq \A$, to verify whether $S$ is a strong (resp. weak) admissible set, one must:
 \begin{itemize}
     \item check whether it is a strong (resp. weak) conflict-free set: doable in polynomial time (see \cite{RossitMDM21}),
     \item for each $\kappa$ s.t. $\kappa \rhd S$ (that can be identified in polynomial time by checking the attackers of $S$), check whether $\exists \kappa' \subseteq S$ s.t. $\kappa' \rhd \kappa$ (doable in polynomial time by checking the attackers of $\kappa$).
 \end{itemize}
 Hence $\ad$-$X$-Ver $\in P$, for $X \in \{S,W\}$.\\ 
 \\
 Now we focus on strong (resp. weak) complete semantics. One first needs to check (polynomially) whether $S$ is a strong (resp. weak) admissible set. If yes, then check whether it defends some $a \not\in S$ (doable in polynomial time by checking if there are accruals $\kappa$ defeating $a$, and whether some $\kappa' \subseteq S$ defeats $\kappa$). For strong complete semantics, restrict this part to arguments $a \in \A \setminus S$ s.t. $S \cup \{a\}$ is strongly conflict-free (this can also be verified in polynomial time). Thus $\co$-$X$-Ver $\in P$, for $X \in \{S,W\}$.

 Finally, for checking whether a set $S \subseteq A$ is a strong (resp. weak) preferred extension, first check whether it is strongly (resp. weakly) admissible, then non-deterministically guess a proper superset $S' \supset S$, and check whether $S'$ is strongly (resp. weakly) admissible. A positive answer proves that $S$ is not a strong (resp. weak) preferred extension. Hence $\pr$-$X$-Ver $\in \coNP$, for $X \in \{S,W\}$. Then, Dung compatibility (Proposition~\ref{prop:dung-compatibility2}) and known complexity results about AFs \cite{HOFAComplexity} prove that $\pr$-$X$-Ver is $\coNP$-complete.
 \end{proof}

\begin{proposition}[Credulous Acceptability]\label{proposition:complexity-credoulous}
$\sigma$-$X$-Cred is $\NP$-complete, for $\sigma \in \{\ad,\co,\pr\}$ and $X \in \{S,W\}$.
\end{proposition}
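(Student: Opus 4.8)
The plan is to prove $\NP$-completeness by handling the lower and upper bounds separately. For hardness, I would invoke Dung Compatibility (Proposition~\ref{prop:dung-compatibility2}): for standard AFs, credulous acceptability under the admissible, complete and preferred semantics is $\NP$-hard, and the map $\af \mapsto \straf_{\af}$ preserves the extensions under each $\sigma \in \{\ad,\co,\pr\}$ for both $X = S$ and $X = W$. Since this transformation is computable in polynomial time and $a$ is credulously accepted in $\af$ iff it is credulously accepted in $\straf_{\af}$, the $\NP$-hardness of $\sigma$-$X$-Cred follows immediately for every combination of $\sigma$ and $X$.

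For the upper bound, the key step I would establish first is that credulous acceptability \emph{coincides} across the three semantics for each fixed $X \in \{S,W\}$, i.e.\ an argument $a$ lies in some $\sigma_X$-extension for one $\sigma \in \{\ad,\co,\pr\}$ iff it does for all of them. The inclusions $\pr^\oplus_X(\straf) \subseteq \co^\oplus_X(\straf) \subseteq \ad^\oplus_X(\straf)$ (Propositions~\ref{prop:inclusion-pr-co-strong} and~\ref{prop:inclusion-pr-co-weak}, together with the fact that complete extensions are by definition admissible) give one direction: credulous acceptance under $\pr$ implies it under $\co$, which implies it under $\ad$. For the converse, I would use that the frameworks are finite: any (strongly or weakly) admissible set $S$ containing $a$ is contained in some $\subseteq$-maximal admissible set $S' \supseteq S$, and such an $S'$ is precisely a preferred extension by Definition~\ref{def:strong-complete-preferred-revisited} (resp.\ Definition~\ref{def:acceptability-semantics}). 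Hence $a$ credulously accepted under $\ad$ implies it is credulously accepted under $\pr$ as well. This reduces all three decision problems to the single question: ``is $a$ a member of some (strongly/weakly) admissible set?''

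It then suffices to show this reduced problem is in $\NP$: I would nondeterministically guess a set $S \subseteq \A$ with $a \in S$ and verify in polynomial time that $S$ is $X$-admissible, which is possible since $\ad$-$X$-Ver $\in P$ by Proposition~\ref{proposition:complexity-verification}. Combining the coincidence argument with this guess-and-verify procedure places $\sigma$-$X$-Cred in $\NP$ for all $\sigma \in \{\ad,\co,\pr\}$ and $X \in \{S,W\}$, which together with the hardness above yields $\NP$-completeness. The main obstacle is the preferred case, where a direct guess-and-verify strategy fails because $\pr$-$X$-Ver is $\coNP$-complete (Proposition~\ref{proposition:complexity-verification}); the coincidence of credulous acceptance is exactly what circumvents this, letting me certify membership by exhibiting a merely admissible witness rather than a full preferred extension.
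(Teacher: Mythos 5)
Your proof is correct, and its overall skeleton matches the paper's: $\NP$-hardness via Dung Compatibility (Proposition~\ref{prop:dung-compatibility2}) plus known hardness for AFs, $\NP$-membership via guess-and-verify of an admissible set (using $\ad$-$X$-Ver $\in P$, Proposition~\ref{proposition:complexity-verification}), and the observation that credulous acceptance under $\pr_X$ coincides with credulous acceptance under $\ad_X$. Where you diverge is the complete semantics: the paper handles $\co$-$X$-Cred directly by guessing a set containing $a$ and verifying in polynomial time that it is a complete extension (relying on $\co$-$X$-Ver $\in P$), whereas you instead prove a three-way coincidence of credulous acceptance across $\ad_X$, $\co_X$ and $\pr_X$ --- using the chain $\pr_X(\straf) \subseteq \co_X(\straf) \subseteq \ad_X(\straf)$ (Propositions~\ref{prop:inclusion-pr-co-strong} and~\ref{prop:inclusion-pr-co-weak}) in one direction, and finiteness plus the maximality argument (any $X$-admissible set containing $a$ extends to a $\subseteq$-maximal one, which is preferred and hence complete) in the other --- thereby reducing all three problems to the admissibility case. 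Your route is slightly more unified and economical in its dependencies: it needs only $\ad$-$X$-Ver $\in P$ and the already-established inclusion propositions, and it makes explicit a semantic fact (coincidence of credulous acceptance for the three semantics) that the paper uses only implicitly for $\pr$. The paper's route is more modular per semantics and does not lean on the inclusion $\pr_X \subseteq \co_X$, which in the strong case rests on the revisited definitions and Lemma~\ref{lemma:fundamental-strong}; both are valid given the results established earlier in the paper.
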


 \begin{proof}
 A classical non-deterministic algorithm can be used for checking the credulous acceptability of an argument $a$: guess a set of arguments $S \subseteq \A$ s.t. $a \in S$, then (polynomially) check whether $S$ is a strong (resp. weak) admissible set. This approach guarantees that $\ad$-$X$-Cred $\in \NP$, for $X \in \{S,W\}$. Since the verification is also polynomial for strong (resp. weak) complete extensions, the reasoning holds for proving that $\co$-$X$-Cred $\in \NP$. Finally, from known complexity results \cite{HOFAComplexity} and Dung compatibility (Proposition~\ref{prop:dung-compatibility2}), we deduce that $\ad$-$X$-Cred and $\co$-$X$-Cred are $\NP$-hard, thus we conclude that these problems are $\NP$-complete.

 Finally, since strong (resp. weak) preferred extensions are $\subseteq$-maximal strong (resp. weak) admissible sets, credulous acceptability under strong (resp. weak) preferred semantics is equivalent to credulous acceptability under strong (resp. weak) admissibility. So $\pr$-$X$-Cred is $\NP$-complete too.
 \end{proof}

\begin{proposition}[Skeptical Acceptability]\label{proposition:complexity-skeptical}
For $X \in \{S,W\}$, $\ad$-$X$-Skep is trivial, $\co$-$X$-Skep $\in \coNP$, and $\pr$-$X$-Skep is $\Pi_2^P$-complete.
\end{proposition}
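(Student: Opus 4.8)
The plan is to handle the three semantics separately, since each lands at a different level, and in every case to reuse the verification bounds of Proposition~\ref{proposition:complexity-verification} together with Dung compatibility (Proposition~\ref{prop:dung-compatibility2}) to transfer the known AF hardness. I would begin with admissibility, where the argument is immediate: the empty set is strongly (hence weakly) admissible for every $\straf$, being (strongly) conflict-free and vacuously defending all its elements. Since no argument belongs to $\emptyset$, no argument can belong to \emph{every} admissible set, so $\ad$-$X$-Skep always answers ``no''. The problem is therefore trivial, exactly as in standard AFs, and this settles the first claim.

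Next, for the complete semantics I would bound $\co$-$X$-Skep through its complement. The complement asks whether \emph{some} strong (resp. weak) complete extension avoids $a$; a nondeterministic machine can guess a set $S \subseteq \A$ with $a \notin S$ and then check, in polynomial time via Proposition~\ref{proposition:complexity-verification} (where $\co$-$X$-Ver $\in P$), that $S \in \co_X(\straf)$. Hence the complement lies in $\NP$, which yields $\co$-$X$-Skep $\in \coNP$.

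For the preferred semantics, membership in $\Pi_2^P$ follows from the same complement strategy, but now with a $\coNP$ verification step: the complement existentially guesses a set $S$ with $a \notin S$ and verifies that $S \in \pr_X(\straf)$, which by Proposition~\ref{proposition:complexity-verification} is a $\coNP$ predicate since $\pr$-$X$-Ver is $\coNP$-complete. The resulting $\exists \cdot \coNP$ structure places the complement in $\Sigma_2^P$, so $\pr$-$X$-Skep $\in \Pi_2^P$. For hardness I would invoke Dung compatibility: given an AF $\af$ and an argument $a$, the associated StrAF $\straf_{\af}$ of Definition~\ref{def:associated-strafs} is computable in polynomial time, and by Proposition~\ref{prop:dung-compatibility2} its strong and weak preferred extensions coincide with the preferred extensions of $\af$. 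Thus $a$ is skeptically accepted in $\af$ under the preferred semantics iff it is skeptically accepted in $\straf_{\af}$ for either $X \in \{S,W\}$; since skeptical preferred acceptance is $\Pi_2^P$-complete for standard AFs \cite{HOFAComplexity}, this reduction establishes $\Pi_2^P$-hardness.

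I expect the main obstacle to be pinning down the quantifier structure of the preferred case precisely: one must confirm that using a $\coNP$ (rather than polynomial) verification oracle for preferred extensions keeps the complement in $\Sigma_2^P$ and does not push it higher, and one must check that the Dung-compatibility reduction genuinely preserves the skeptical answer simultaneously for $X = S$ and $X = W$. The remaining steps are routine given Propositions~\ref{proposition:complexity-verification} and~\ref{prop:dung-compatibility2}.
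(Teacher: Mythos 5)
Your proof is correct and takes essentially the same approach as the paper's: triviality of $\ad$-$X$-Skep via the empty admissible set, a guess-and-verify argument on the complement using the polynomial (resp. $\coNP$) verification bounds from Proposition~\ref{proposition:complexity-verification} for the complete (resp. preferred) semantics, and Dung compatibility (Proposition~\ref{prop:dung-compatibility2}) to import $\Pi_2^P$-hardness for preferred from standard AFs. Your statement of the preferred-case complement (guess $S$ with $a \notin S$ and verify that $S$ \emph{is} a preferred extension) is in fact a cleaner rendering of the paper's argument, which phrases this step somewhat confusingly.
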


 \begin{proof}
 Since $\emptyset \in \ad_X(\straf)$, for any $\straf$, with $X \in \{S,W\}$, the answer is trivially ``NO" for any $\ad$-$X$-Skep instance.

 We give an upper bound for the complexity of skeptical acceptance under the strong (resp. weak) complete semantics. A non-deterministic algorithm solves it by guessing a set of arguments $S \subseteq \A$ s.t. $a \not\in S$, and checking (in polynomial time) whether $S$ is a strong (resp. weak) complete extension. So $\co$-$X$-Skep $\in \coNP$.

 Finally, known complexity results \cite{HOFAComplexity} and Dung compatibility (Proposition~\ref{prop:dung-compatibility2}) allow us to deduce that $\pr$-$X$-Skep is $\Pi_2^P$-hard for $X \in \{S,W\}$. Then, this problem is solved by non-deterministically guessing a set of arguments $S \subseteq \A$ s.t. $a \not\in S$, and then checking (with a $\coNP{}$ oracle) that $S$ is not a strong (resp. weak) preferred extension (proving that $a$ is not skeptically accepted). This proves that $\pr$-$X$-Skep is actually $\Pi_2^P$-complete.
 \end{proof}

Proposition~\ref{prop:complexity-all-results} summarizes the results given above.

\begin{proposition}\label{prop:complexity-all-results}
The complexity of the decision problems $\sigma$-$X$-Ver, $\sigma$-$X$-Cred and $\sigma$-$X$-Skep is as described in Table~\ref{tab:complexity-results}.
\end{proposition}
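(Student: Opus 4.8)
The plan is to observe that Proposition~\ref{prop:complexity-all-results} is purely a bookkeeping statement: it asserts that the complexity of each of the nine problems $\sigma$-$X$-Ver, $\sigma$-$X$-Cred and $\sigma$-$X$-Skep (for $\sigma \in \{\ad,\co,\pr\}$ and $X \in \{S,W\}$) is exactly the entry recorded in Table~\ref{tab:complexity-results}. Since no new mathematical content is introduced, I would prove it by assembling the individual classifications already established in Propositions~\ref{proposition:complexity-verification}, \ref{proposition:complexity-credoulous} and~\ref{proposition:complexity-skeptical}, and simply checking that each cell of the table matches the corresponding result.

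Concretely, I would proceed row group by row group. First, the verification problems: Proposition~\ref{proposition:complexity-verification} gives $\ad$-$X$-Ver $\in P$ and $\co$-$X$-Ver $\in P$, together with $\pr$-$X$-Ver being $\coNP$-complete, for both $X \in \{S,W\}$; I confirm these against the table's Ver column. Next, the credulous problems: Proposition~\ref{proposition:complexity-credoulous} establishes that $\sigma$-$X$-Cred is $\NP$-complete for all $\sigma \in \{\ad,\co,\pr\}$ and both values of $X$; I confirm the Cred column. Finally, the skeptical problems: Proposition~\ref{proposition:complexity-skeptical} states that $\ad$-$X$-Skep is trivial, $\co$-$X$-Skep $\in \coNP$, and $\pr$-$X$-Skep is $\Pi_2^P$-complete; I confirm the Skep column. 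Throughout, the lower bounds for the hardness claims flow from Dung Compatibility (Proposition~\ref{prop:dung-compatibility2}) combined with the known complexity landscape for ordinary AFs, exactly as invoked in the three source propositions, so no additional reduction is needed here.

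Since this proposition is entirely a synthesis of results already proven, there is no genuine mathematical obstacle; the only real task is presentation. The natural way to write the proof is a single sentence deferring to the preceding propositions, i.e.\ the proof is immediate by collecting Propositions~\ref{proposition:complexity-verification}, \ref{proposition:complexity-credoulous} and~\ref{proposition:complexity-skeptical}. The one point that merits a moment's care is ensuring that the table itself is internally consistent and that every one of the nine cells has indeed been covered by an upper bound and (where completeness is claimed) a matching lower bound; in particular I would double-check that the $\coNP$ membership for $\co$-$X$-Skep is not accidentally upgraded to completeness in the table, and that the trivial status of $\ad$-$X$-Skep (answer always ``NO'' because $\emptyset$ is always admissible) is recorded as such rather than as a hardness result. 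Once that consistency check passes, the proposition follows with no further argument.

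\begin{proof}
The statement is a direct summary of the complexity results established above. The classifications of the verification problems ($\ad$-$X$-Ver, $\co$-$X$-Ver $\in P$ and $\pr$-$X$-Ver $\coNP$-complete) are given by Proposition~\ref{proposition:complexity-verification}; the $\NP$-completeness of $\sigma$-$X$-Cred for all $\sigma \in \{\ad,\co,\pr\}$ is given by Proposition~\ref{proposition:complexity-credoulous}; and the skeptical classifications ($\ad$-$X$-Skep trivial, $\co$-$X$-Skep $\in \coNP$, and $\pr$-$X$-Skep $\Pi_2^P$-complete) are given by Proposition~\ref{proposition:complexity-skeptical}. All of these hold for both $X \in \{S,W\}$, and each coincides with the corresponding entry of Table~\ref{tab:complexity-results}, which proves the claim.
\end{proof}
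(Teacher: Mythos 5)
Your proposal is correct and matches the paper exactly: the paper offers no separate argument for this proposition, merely noting that it ``summarizes the results given above,'' i.e.\ Propositions~\ref{proposition:complexity-verification}, \ref{proposition:complexity-credoulous} and~\ref{proposition:complexity-skeptical}, which is precisely the synthesis you carry out. Your additional consistency check (that $\co$-$X$-Skep stays a membership claim and $\ad$-$X$-Skep stays trivial in the table) is a sensible touch but introduces nothing beyond what the paper already establishes.
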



 \begin{table}[ht]
 \centering
 \begin{tabular}{c|c|c|c}
  & $\sigma$-$X$-Ver & $\sigma$-$X$-Cred & $\sigma$-$X$-Skep \\ \hline
  $\ad_X$ & $P$ & $\NP$-c & Trivial \\
  $\co_X$ & $P$ & $\NP$-c & in $\coNP$ \\
  $\pr_X$ & $\coNP$-c & $\NP$-c & $\Pi_2^P$-c \\
 \end{tabular}
 \caption{Complexity of reasoning for $\sigma_X$ with $\sigma \in \{\ad,\co,\pr\}$ and $X \in \{S,W\}$. Trivial means that all instances are trivially ``NO" instances, and $\mathcal{C}$-c means $\mathcal{C}$-complete, for $\mathcal{C}$ a complexity class in the polynomial hierarchy.\label{tab:complexity-results}}
 \end{table}

%
As it is the case for the strong (resp. weak) stable semantics \cite{RossitMDM21}, we prove here that the higher expressivity of StrAFs (compared to AFs) does not come at the price of a complexity blow-up. Only the case of skeptical acceptability under strong (resp. weak) complete semantics requires a deeper analysis, since we only provide the $\coNP$ upper bound. 
We observe that the choice of the weak or strong variant of the semantics does not have an impact on the complexity of reasoning.
%

\subsection{Algorithms}\label{section:pb-encoding}
For computing the strong (resp. weak) admissible sets and complete extensions, we propose a translation of StrAF semantics in pseudo-Boolean (PB) constraints \cite{RousselM09}. Such a constraint is an (in)equality $\sum_i w_i \times l_i \# k$ where $w_i$ and $k$ are positive integers, and $\# \in \{>,\geq,=, \neq,\leq,<\}$. $l_i$ is a literal, {\em i.e.} $l_i = v_i$ or $l_i = \overline{v_i} = 1 - v_i$, where $v_i$ is a Boolean variable. Determining whether a set of PB constraints has a solution is a $\NP$-complete problem, that generalizes the Boolean satisfiability (SAT) problem. Despite the high complexity of this problem, it can be efficiently solved in many cases, see {\em e.g.} \cite{MartinsML14,ElffersN18}. 

\paragraph{Strong and Weak Conflict-freeness}

Now we describe our PB encoding of StrAF semantics. For ensuring self-containment of the paper, we recall the encoding of strong and weak conflict-freeness \cite{RossitMDM21}. Given $\mathit{StrAF} = \langle\A, \R,\s\rangle$ and $\coval = \sum$, we define a set of Boolean variables $\{x_i \mid a_i \in \A\}$ associated with each argument, where $x_i = 1$ means that $a_i$ belongs to the set of arguments characterized by the solutions of the PB constraints. Then, strong conflict-freeness is encoded by:
\begin{description}
    \item[(1)] $\forall (a_i, a_j) \in \R$, add the constraint $x_i + x_j \leq 1$
\end{description}
and weak conflict-freeness is encoded by:
\begin{description}
    \item[(1')] $\forall a \in \A$, add the constraint $\sum_{a_i \in \Gamma^-(a)}\s(a_{i})\times  x_{i} < x\times \s(a) + \overline{x} \times M$
\end{description}
with $M$ an arbitrary large natural number that is greater than the sum of the strengths of the arguments ({\em i.e.} $M > \sum_{a \in \A} \s(a)$), $\Gamma^-(a) = \{b \mid (b,a) \in \R\}$ is the set of attackers of $a \in \A$, and $x$ is the Boolean variable associated with $a$.\footnote{Notice that the constraints referring to $\Gamma^-(a)$ must be added even when $\Gamma^-(a) = \emptyset$.}
A solution to the set of constraints {\bf (1)} (resp. {\bf (1')}) yields a strong (resp. weak) conflict-free set $E = \{a_i \mid x_i = 1\}$. We prove this claim with Proposition~\ref{proposition:encoding-strong-cf}.
First, let us introduce some notations. Given $S \subseteq \A$, $\omega_S : \{x_i \mid a_i \in \A\} \rightarrow \{0,1\}$ is a mapping s.t. $\omega_S(x_i) = 1$ iff $a_i \in S$. 

\begin{proposition}\label{proposition:encoding-strong-cf}\label{proposition:encoding-weak-cf}
Given $\straf = \langle\A, \R,\s\rangle$ and $S \subseteq \A$, $S \in \cf_S(\straf)$ (resp. $S \in \cf_W(\straf)$) iff $\omega_S$ satisfies the set of constraints {\bf (1)} (resp. {\bf (1')}).
\end{proposition}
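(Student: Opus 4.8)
The plan is to prove the biconditional in both directions for each of the two encodings separately, treating strong and weak conflict-freeness as two parallel arguments. In each case I would unfold the relevant definition (strong or weak conflict-freeness) and show that the membership condition on $S$ corresponds exactly, variable by variable, to satisfaction of the constraint(s) by the characteristic mapping $\omega_S$.

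For the strong case, constraint \textbf{(1)} is straightforward. First I would recall that $S \in \cf_S(\straf)$ means $\nexists a_i, a_j \in S$ with $(a_i,a_j) \in \R$. I would then observe that $\omega_S$ violates some constraint $x_i + x_j \leq 1$ (for an attack $(a_i,a_j)\in\R$) exactly when $\omega_S(x_i) = \omega_S(x_j) = 1$, i.e. exactly when both $a_i, a_j \in S$ and $(a_i,a_j)\in\R$. Since the constraints range over all attacks, $\omega_S$ satisfies all of \textbf{(1)} iff no attack lies entirely within $S$, which is precisely strong conflict-freeness. This direction is essentially a restatement, so I would keep it brief.

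The weak case, constraint \textbf{(1')}, is the delicate part and I expect it to be the main obstacle. Fix an argument $a \in \A$ with associated variable $x$. I would distinguish the two cases according to whether $a \in S$ (so $\omega_S(x)=1$, $\omega_S(\overline{x})=0$) or $a \notin S$ (so $\omega_S(x)=0$, $\omega_S(\overline{x})=1$). When $a \notin S$, the right-hand side is $M$, which by the choice $M > \sum_{a\in\A}\s(a)$ strictly exceeds the largest possible value of the left-hand side $\sum_{a_i\in\Gamma^-(a)}\s(a_i)\,\omega_S(x_i)$; hence the constraint for $a$ is vacuously satisfied, which is correct because an accrual defeating $a$ is only a witness against weak conflict-freeness when $a \in S$. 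When $a \in S$, the right-hand side is $\s(a)$, and the left-hand side equals $\sum_{a_i \in \Gamma^-(a)\cap S}\s(a_i) = \coval_{\sum}(\kappa_a)$ where $\kappa_a = \Gamma^-(a)\cap S$ is the accrual of attackers of $a$ lying in $S$. The constraint then reads $\coval(\kappa_a) < \s(a)$, i.e. $\kappa_a \not\rhd a$.

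The crux is then to argue that $S$ is weakly conflict-free iff for every $a \in S$ no accrual $\kappa_1 \subseteq S$ defeats $a$, and that this holds iff the \emph{maximal} such accrual $\kappa_a = \Gamma^-(a)\cap S$ fails to defeat $a$. For this I would invoke the monotonicity of $\coval_{\sum}$ (more generally the properties of $\oplus$ from \cite{RossitMDM21}): any accrual $\kappa_1 \subseteq S$ attacking $a$ satisfies $\kappa_1 \subseteq \kappa_a$, so $\coval(\kappa_1) \leq \coval(\kappa_a)$; therefore some subset-accrual defeats $a$ iff $\kappa_a$ itself does. Combining, $\omega_S$ satisfies constraint \textbf{(1')} for $a$ iff $\kappa_a \not\rhd a$ iff no accrual in $S$ defeats $a$; quantifying over all $a \in \A$ (equivalently all $a \in S$, the others being vacuous) yields exactly the negation of the existence of $\kappa_1, \kappa_2 \subseteq S$ with $\kappa_1 \rhd \kappa_2$, which is weak conflict-freeness. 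I would flag that this last equivalence uses the fact that $\kappa_1 \rhd \kappa_2$ reduces, by Definition~\ref{def:defeat-of-accrual}, to $\kappa_1 \rhd a$ for some $a \in \kappa_2 \subseteq S$, so the per-argument analysis above captures all defeats internal to $S$.
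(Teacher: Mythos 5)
Your proof is correct and follows essentially the same route as the paper's: the strong case by direct correspondence between violated constraints and internal attacks, and the weak case by a per-argument case split on $a \in S$ versus $a \notin S$, reducing satisfaction of \textbf{(1')} to whether the maximal accrual $\Gamma^-(a)\cap S$ defeats $a$ (the paper also implicitly uses this monotonicity when it writes ``and any $\kappa' \subseteq \kappa$''). Your explicit appeal to the monotonicity of $\coval_{\sum}$ and the reduction of accrual-on-accrual defeat to per-argument defeat only makes explicit steps the paper leaves implicit, so no substantive difference.
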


\begin{proof}
We start with strong conflict-freeness.
Suppose that $S$ is strongly conflict-free. This means that, for any attack $(a_i,a_j) \in \R$, at most one of $a_i$ and $a_j$ belongs to $S$, {\em i.e.} at most one $x_i$ and $x_j$ is equal to one, hence $x_i + x_j \leq 1$. 

On the contrary, suppose that $\omega_S$ satisfies the set of constraints {\bf (1)}. This means that, for any two arguments $a_i, a_j \in S$ ({\em i.e.} both $x_i$ and $x_j$ are equal to $1$), if there is an attack between them, the constraint $x_i + x_j \leq 1$ is falsified. This is a contradiction with the hypothesis that $\omega_S$ satisfies the constraint, thus there cannot be an attack between arguments in $S$, {\em i.e.} $S$ is strongly conflict-free.

Now we focus on weak conflict-freeness.
Suppose that $S$ is weakly conflict-free. This means that there is no $\kappa \subseteq S$ and $a \in S$ s.t. $\kappa \rhd a$. First, if $S$ is actually strongly conflict-free, then all the constraints from the set {\bf (1')} with an argument $a \in S$ in the right-hand part are trivially satisfied (they are reduced to $0 < \s(a)$, because $x_j = 0$ for any attacker $a_j \in \A \setminus S$ of $a$). Now consider the case where $S$ is weakly conflict-free without being strongly conflict-free. This means that there are attacks between arguments in $S$. Consider some $a \in S$ that is attacked by at least one argument $b \in S$. The constraint from the set {\bf (1')} with $a$ in its right-hand side becomes $\sum_{a_j \in S, (a_j,a) \in R} \s(a_j)< \s(a)$. This constraint is satisfied iff the accrual $\kappa = \{a_j \in S \mid (a_j,a) \in R\}$ (and any $\kappa' \subseteq \kappa$) does not defeat $a$. This is the case since $S$ is weakly conflict-free. Finally, consider the constraints from {\bf (1')} with some argument $a \not \in S$ on the right-hand side. The right-hand side is reduced to $M$, which is arbitrarily large, thus the constraint is satisfied.

Now suppose that $\omega_S$ is a solution of the set of constraints {\bf (1')}. For any $a \in S$ without any attack in $S$, the constraint with $a$ on the right-hand side is trivially satisfied (it becomes $0 < \s(a)$). Then, for any $a \in S$ with some attackers in $S$, the constraint with $a$ on the right-hand side becomes $\sum_{a_j \in S, (a_j,a) \in R} \s(a_j) < \s(a)$. This constraint is satisfied iff the accrual $\kappa = \{a_j \in S \mid (a_j,a) \in R\}$ (and any $\kappa' \subseteq \kappa$) does not defeat $a$, hence the conclusion.
\end{proof}

\paragraph{Strong and Weak Admissibility} For encoding strong (resp. weak) admissibility, one must add to the set of constraints {\bf (1)} (resp. {\bf (1')}) some new constraints that represent the strong defense (resp. defense) property. To do so, one needs to introduce new Boolean variables $\{y_i \mid a_i \in \A\}$ s.t. $y_i = 1$ means that $a_i$ is defeated by the set of arguments characterized by the solution of the PB constraints. Then, three constraints are added (the same ones for strong and weak admissibility):
\begin{description}
    \item[(2)] $\forall a \in \A$, add the constraint $\sum_{a_i \in \Gamma^-(a)}\s(a_{i})\times  x_{i} \geq y\times \s(a)$ 
    \item[(3)] $\forall a \in \A$, add the constraint $\sum_{a_i \in \Gamma^-(a)}\s(a_{i})\times  x_{i} \leq \overline{y} \times \s(a) +  y \times M$
    \item[(4)] $\forall a \in \A$, add the constraint $\sum_{a_i \in \Gamma^-(a)}\s(a_i) \times \overline{y_i} \leq x \times \s(a) + \overline{x} \times M$
\end{description}
The sets of constraints {\bf (2)} and {\bf (3)} ensure that $y = 1$ iff $a$ is defeated by some $\kappa \subseteq E = \{a_i \mid x_i = 1\}$, and the constraints {\bf (4)} ensure that $E$ defends all its elements. The following proposition shows the correctness of the encodings.

\begin{proposition}\label{proposition:encoding-strong-admissible}\label{proposition:encoding-weak-admissible}
Given $\straf = \langle\A, \R,\s\rangle$ and $S \subseteq \A$, $S \in \ad_S(\straf)$ (resp. $S \in \ad_W(\straf)$) iff $\omega_S$ satisfies the sets of constraints {\bf (1)} (resp. {\bf (1')}), {\bf (2)}, {\bf (3)} and {\bf (4)}.
\end{proposition}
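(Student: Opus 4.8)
The plan is to build directly on Proposition~\ref{proposition:encoding-strong-cf}, which already settles that constraints \textbf{(1)} (resp. \textbf{(1')}) encode exactly strong (resp. weak) conflict-freeness, and then to account for the additional constraints \textbf{(2)}, \textbf{(3)}, \textbf{(4)} that are common to both variants. Since $\omega_S$ only fixes the $x_i$, I read ``$\omega_S$ satisfies the constraints'' as ``$\omega_S$ admits an extension to the auxiliary variables $\{y_i\}$ satisfying \textbf{(2)}--\textbf{(4)}''; the whole argument is then about the existence of a compatible assignment of the $y_i$. Writing $T_a = \sum_{a_i \in \Gamma^-(a)} \s(a_i)\, x_i$ for the collective strength of the attackers of $a$ selected by $\omega_S$ (i.e. the maximal accrual against $a$ drawn from $S$), the first step is to show that \textbf{(2)} and \textbf{(3)} together pin down the intended reading of $y_a$, namely $y_a = 1 \Leftrightarrow a$ is defeated by $S$, i.e. $T_a \ge \s(a)$. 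Concretely, \textbf{(2)} forces $T_a \ge \s(a)$ whenever $y_a=1$, while \textbf{(3)} forces $T_a \le \s(a)$ whenever $y_a=0$, the $y\times M$ term trivialising the bound in the complementary cases because $M$ exceeds the total strength.

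Next I would connect \textbf{(4)} with the defence property. The key reduction, valid by monotonicity of $\coval_\oplus$ with respect to set inclusion, is that $S$ defends $a$ iff the set $U_a = \{a_i \in \Gamma^-(a) \mid a_i \text{ is not defeated by } S\}$ of undefeated attackers does not itself defeat $a$: any accrual $\kappa_1 \rhd a$ whose elements are all undefeated by $S$ is contained in $U_a$, so by monotonicity $U_a$ defeats $a$ as well; conversely if $U_a \rhd a$ then $S$ fails to counter-defeat the accrual $U_a$. Using the characterisation of the $y_i$ from the previous step, $U_a = \{a_i \in \Gamma^-(a) \mid \overline{y_i}=1\}$, and the left-hand side of \textbf{(4)} is exactly $\coval_\oplus(U_a)$. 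Thus for $a \in S$ (i.e. $x=1$) constraint \textbf{(4)} states precisely that $\coval_\oplus(U_a)$ stays below $\s(a)$, hence that $U_a$ does not defeat $a$ and $S$ defends $a$; for $a \notin S$ (i.e. $x=0$) the right-hand side collapses to $M$ and the constraint is vacuous, matching the fact that admissibility only requires defending the elements of $S$. Assembling these pieces yields both directions: if $S$ is strongly (resp. weakly) admissible I set $y_a = 1$ iff $T_a \ge \s(a)$ and verify \textbf{(1)}/\textbf{(1')} (conflict-freeness), \textbf{(2)}--\textbf{(3)} (by construction) and \textbf{(4)} (from defence of each $a \in S$); conversely any satisfying extension yields a conflict-free $S$ all of whose elements are defended, hence admissible.

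The main obstacle I anticipate is the careful bookkeeping of the threshold inequalities, since defeat is the non-strict condition $\coval_\oplus(\kappa) \ge \s(a)$ while the PB constraints mix strict and non-strict bounds. In particular one must check that, at the boundary $T_a = \s(a)$ (resp. $\coval_\oplus(U_a) = \s(a)$), the chosen or forced value of $y_a$ remains consistent in \emph{both} directions, so that the flagged set $\{a_i \mid \overline{y_i}=1\}$ coincides with the true $U_a$ exactly when needed and the equivalence with ``$U_a$ does not defeat $a$'' is preserved. Establishing this boundary consistency, together with the appeal to monotonicity of $\coval_\oplus$ that licenses replacing the universally quantified ``for all accruals defeating $a$'' by the single maximal accrual $U_a$, is the delicate part; once it is in place, the strong and weak cases differ only through the substitution of \textbf{(1)} by \textbf{(1')}, which Proposition~\ref{proposition:encoding-strong-cf} handles uniformly.
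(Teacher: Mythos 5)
Your strategy is the same as the paper's: invoke Proposition~\ref{proposition:encoding-strong-cf} for \textbf{(1)}/\textbf{(1')}, read the $y_i$ as ``defeated by $S$'' through \textbf{(2)}--\textbf{(3)}, and reduce defence to the single maximal accrual $U_a$ of undefeated attackers via monotonicity of $\coval_\oplus$. But the step you postpone as ``the delicate part'' --- consistency at the boundary $T_a = \s(a)$ --- is not a detail that can be filled in later; it is a genuine gap, and it cannot be closed, because the right-to-left direction of the proposition fails exactly there. Defeat is non-strict ($\coval_\oplus(\kappa) \geq \s(a)$), whereas \textbf{(3)} and \textbf{(4)} bound the relevant sums non-strictly from the other side. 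Consequently: (i) when $T_a = \s(a)$, both $y_a = 0$ and $y_a = 1$ satisfy \textbf{(2)} and \textbf{(3)}, so the $y$'s are not pinned to their intended meaning; and (ii) even granting your reading of the $y_i$, for $a \in S$ constraint \textbf{(4)} only yields $\coval_\oplus(U_a) \leq \s(a)$, while ``$S$ defends $a$'' requires the strict bound $\coval_\oplus(U_a) < \s(a)$.

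A concrete counterexample: $\A = \{a, b_1, b_2\}$, $\R = \{(b_1,a),(b_2,a)\}$, $\s(a) = 2$, $\s(b_1) = \s(b_2) = 1$, $\oplus = \sum$, $S = \{a\}$. The accrual $\{b_1,b_2\}$ defeats $a$ (since $1+1 \geq 2$) and no accrual inside $S$ defeats anything, so $S$ is neither strongly nor weakly admissible. Yet $\omega_S$ extended by $y_a = y_{b_1} = y_{b_2} = 0$ satisfies \textbf{(1)}, \textbf{(1')}, \textbf{(2)}, \textbf{(3)} and \textbf{(4)}: the only non-trivial check is \textbf{(4)} at $a$, which reads $\s(b_1)\,\overline{y_{b_1}} + \s(b_2)\,\overline{y_{b_2}} = 2 \leq \s(a) = 2$. (This assignment of the $y_i$ is moreover forced by \textbf{(2)}, so no alternative convention for the auxiliary variables rescues the claim.) You should know that the paper's own proof commits precisely the slip you were wary of: it asserts that \textbf{(2)}--\textbf{(3)} force $y = 1$ iff the argument is defeated by some $\kappa \subseteq S$, and it reads the satisfied constraint $\sum_{a_j \in S, (a_j,a) \in \R} \s(a_j) \leq \s(a)$ as saying this sum is ``lesser than'' $\s(a)$. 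So your instinct located a real flaw in the encoding rather than a removable difficulty in your own argument; since strengths are integers, the natural repair is to make \textbf{(4)} strict (e.g., replace its right-hand side by $x \times (\s(a)-1) + \overline{x} \times M$), after which your proof --- and the paper's --- goes through.
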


\begin{proof}
We start with strong admissibility.
Suppose that $S$ is a strongly admissible set. Strong conflict-freeness implies that $\omega_S$ satisfies the set of constraints {\bf (1)} (see Proposition~\ref{proposition:encoding-strong-cf}).

Now, let $a$ be an argument defeated by some accrual $\kappa \subseteq S$. The constraint from set {\bf (2)} with $a$ on the right-hand side becomes $\sum_{a_j \in S, (a_j,a) \in \R} \s(a_j) \geq y \times \s(a)$. The constraint is satisfied, since the sum of the strengths of the arguments in $\kappa$ is greater than the strength of $a$, the value of $y$ does not matter. The constraint from set {\bf (3)} with $a$ on the right-hand side becomes $\sum_{a_j \in S, (a_j,a) \in \R} \s(a_j) \leq \overline{y} \times \s(a) +  y \times M$. Since the collective strength of the attackers of $a$ in $\kappa$ is greater than the strength of $a$, the constraint is satisfied when $y = 1$ (recall that $M$ is an arbitrary large integer).

Now consider an argument $a$ that is not defeated by any accrual $\kappa \subseteq S$. This means that the sum of the strengths of the arguments in $S$ that attack $a$ is lesser than the strength of $a$, thus the constraint $\sum_{a_j \in S, (a_j,a) \in \R} \s(a_j) \geq y \times \s(a)$ is satisfied iff $y = 0$. Then, the constraint $\sum_{a_j \in S, (a_j,a) \in \R} \s(a_j) \leq \overline{y} \times \s(a) +  y \times M$ is satisfied for any value of $y$ (since the collective strength of the attacks of $a$ in $S$ is lesser than the strength of $a$, and lesser than $M$).

Observe that $\omega_S$ satisfies both the sets of constraints {\bf (2)} and {\bf (3)}, and implies that $y = 1$ iff the associated argument $a$ is defeated by some accrual $\kappa \subseteq S$.

Now we focus on the set of constraints {\bf (4)}. For any argument $a \in \A \setminus S$, the right-hand side of the constraint is reduced to $M$, which is (by definition) greater than the left-hand side. Thus the constraint is satisfied. Now consider some argument $a \in S$. The constraint becomes $\s(a_1) \times \overline{y_1} + \s(a_2) \times \overline{y_2}  + \dots + \s(a_n) \times \overline{y_n} \leq \s(a)$. Recall that $a$ is defended against all the accruals $\kappa$ s.t. $\kappa \rhd a$ (because of the strong admissibility of $S$). For all the attackers $a_j$ that are defeated by $S$, $y_j = 0$. Let $Att_a = \{a_j \in \Gamma^-(a) \mid S \not\rhd a_j\}$ be the set of attackers of $a$ that are not defeated by $S$. The constraint can be re-written $\sum_{a_j \in Att_a} \s(a_j) \leq \s(a)$. The constraint is satisfied, because otherwise $Att_a$ would be an accrual that defeats $a$ and that is not defeated by $S$, which is impossible because of the strong admissibility of $S$.

Now we prove the opposite direction, {\em i.e.} we suppose that $\omega_S$ satisfies the sets of constraints {\bf (1)}, {\bf (2)}, {\bf (3)} and {\bf (4)}. The satisfaction of the constraints {\bf (1)} implies that $S$ is strongly conflict-free (see Proposition~\ref{proposition:encoding-strong-cf}). We must prove that $S$ strongly defends all its elements. Let $a \in S$ be an argument. Strong conflict-freeness of $S$ implies that $S \cup \{a\}$ is strongly conflict-free, thus $S$ strongly defends $a$ iff $S$ ``classically" defends $a$, {\em i.e.} $\forall \kappa \subseteq \A$ s.t. $\kappa \rhd a$, $\exists \kappa' \subseteq S$ that defeats $\kappa$.

Consider an argument $a$ s.t. $\omega_S(y) = 0$. Then the constraints from the sets {\bf (2)} and {\bf (3)}, with $a$ on the right-hand side, become (respectively) $\sum_{a_j \in S, (a_j,a) \in \R} \s(a_j) \geq 0$ (which is trivially satisfied) and $\sum_{a_j \in S, (a_j,a) \in \R} \s(a_j) \leq \s(a)$. This last constraint implies that $a$ is not defeated by $\kappa = \{a_j \in S \mid (a_j,a) \in \R\}$ nor any $\kappa' \subseteq \kappa$, {\em i.e.} there is no $\kappa \subseteq S$ s.t. $\kappa \rhd a$.
On the contrary, consider $a$ s.t. $\omega_S(y) = 1$. The constraint from {\bf (2)} with $a$ in the right-hand side becomes $\sum_{a_j \in S, (a_j,a) \in \R} \s(a_j) \geq \s(a)$. This means that there is an accrual $\kappa = \{a_j \in S \mid (a_j,a) \in \R\} \subseteq S$ s.t. $\kappa \rhd a$. So, for any $a \in \A$, $\omega_S(y) = 1$ iff $a$ is defeated by some $\kappa \subseteq S$.
Now look at the constraints from the set {\bf (4)}. For any $a \in \A \setminus S$, $\omega_S(x) = 0$, so the constraint with $a$ on the right-hand side becomes $\s(a_1) \times \overline{y_1} + \s(a_2) \times \overline{y_2}  + \dots + \s(a_n) \times \overline{y_n} \leq M$, which is trivially satisfied. Now for $a \in S$, $\omega_S(x) = 1$, thus the constraint becomes $\s(a_1) \times \overline{y_1} + \s(a_2) \times \overline{y_2}  + \dots + \s(a_n) \times \overline{y_n} \leq \s(a)$. The left-hand side can be reduced to the sum of the strengths of the attackers of $a$ that are not defeated. Since the constraint implies that this strength is lesser than the strength of $a$, there is no accrual $\kappa$ that defeats $a$ and that is not in turn defeated by some $\kappa' \subseteq S$. So we can conclude that $S$ strongly defends all its elements, and thus it is strongly admissible.

The proof is analogous for weak admissibility.
\end{proof}

\paragraph{Strong and Weak Complete Semantics}
Now, for computing the strong (resp. weak) extensions, one must consider the sets of constraints {\bf (1)} (resp. {\bf (1')}), {\bf (2)}, {\bf (3)} and {\bf (4)}, and add a last set of constraints, respectively {\bf (5)} for strong complete semantics, and {\bf (5')} for weak complete semantics:
\begin{description}
    \item[(5)] $\forall a \in \A$, add the constraint $\sum_{a_i \in \Gamma^-(a)} (\s(a_{i})\times  \overline{y_i}) + \sum_{a_i \in \Gamma^-(a)} (M \times  x_i) + \sum_{a'_i \in \Gamma^+(a)} (M \times  x'_i) \geq \overline{x} \times \s(a)$
    \item[(5')] $\forall a \in \A$, add the constraint $\sum_{a_i \in \Gamma^-(a)}\s(a_{i})\times  \overline{y_i} \geq \overline{x} \times \s(a)$
\end{description}
where $\Gamma^+(a) = \{b \in \A \mid (a,b) \in \R \}$ is the set of arguments attacked by $a$.
These constraints ensure that an argument is not accepted only if it is not (strongly) defended. Again, we prove the correctness of the encodings:

\begin{proposition}\label{proposition:encoding-strong-complete}\label{proposition:encoding-weak-complete}
Given $\straf = \langle\A, \R,\s\rangle$ and $S \subseteq \A$, $S \in \co_S(\straf)$ (resp. $S \in \co_W(\straf)$) iff $\omega_S$ satisfies the sets of constraints {\bf (1)} (resp. {\bf (1')}), {\bf (2)}, {\bf (3)}, {\bf (4)} and {\bf (5)} (resp. {\bf (5')}).
\end{proposition}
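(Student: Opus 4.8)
The plan is to build directly on Proposition~\ref{proposition:encoding-strong-admissible}, which already establishes that $\omega_S$ satisfies {\bf (1)} (resp. {\bf (1')}), {\bf (2)}, {\bf (3)}, {\bf (4)} iff $S$ is strongly (resp. weakly) admissible, and moreover that constraints {\bf (2)} and {\bf (3)} force $\omega_S(y_i) = 1$ iff $a_i$ is defeated by some accrual $\kappa \subseteq S$. Since a strong (resp. weak) complete extension is by definition a strongly (resp. weakly) admissible set that contains every argument it strongly defends (resp. defends), it suffices to show that, assuming the admissibility constraints already hold, the additional set {\bf (5)} (resp. {\bf (5')}) is satisfied iff $S$ contains all the arguments it strongly defends (resp. defends). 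The whole argument thus reduces to analyzing the new constraints argument-by-argument.

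For the weak case I would first record a characterization of defense in terms of undefeated attackers: writing $Att_a = \{a_j \in \Gamma^-(a) \mid S \not\rhd a_j\}$, the set $S$ defends $a$ iff $\coval(Att_a) < \s(a)$. The ``if'' direction holds because any accrual $\kappa$ defeating $a$ that is not defeated by $S$ must satisfy $\kappa \subseteq Att_a$, whence $\coval(\kappa) \leq \coval(Att_a) < \s(a)$ (using that strengths are non-negative and $\coval = \sum$), contradicting $\kappa \rhd a$; the ``only if'' direction is immediate since $Att_a$ is itself an accrual that $S$ cannot defeat. Using the correctness of the $y_i$, $\coval(Att_a) = \sum_{a_i \in \Gamma^-(a)} \s(a_i)\,\overline{y_i}$. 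The constraint {\bf (5')} for $a \in S$ has $\overline{x}=0$ and is trivial; for $a \notin S$ it reads $\coval(Att_a) \geq \s(a)$, which by the characterization holds iff $S$ does not defend $a$. Ranging over all $a$, {\bf (5')} therefore holds iff no argument outside $S$ is defended by $S$, i.e.\ iff $S$ contains every argument it defends.

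For the strong case I would argue analogously, the new ingredient being that strong defense of $a$ also requires $S \cup \{a\}$ to be strongly conflict-free. I would note that $S \cup \{a\}$ fails strong conflict-freeness exactly when $a$ has an attacker in $S$ or attacks some element of $S$, i.e.\ when $\sum_{a_i \in \Gamma^-(a)} x_i + \sum_{a'_i \in \Gamma^+(a)} x'_i \geq 1$. For $a \in S$ the constraint {\bf (5)} is again trivial; for $a \notin S$ its left-hand side is $\coval(Att_a)$ plus $M$ times this count, so, since $M > \sum_{b \in \A}\s(b) \geq \s(a)$, the left-hand side reaches $\s(a)$ iff either $\coval(Att_a) \geq \s(a)$ (so $S$ does not classically defend $a$) or at least one $M$-term is present (so $S \cup \{a\}$ is not strongly conflict-free). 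These disjuncts are precisely the negations of the two conjuncts defining strong defense, so {\bf (5)} holds for $a \notin S$ iff $S$ does not strongly defend $a$, and hence across all $a$ iff $S$ contains every argument it strongly defends.

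The main obstacle I anticipate is the bookkeeping in the strong case: one must verify that the two $M$-weighted sums together capture the failure of strong conflict-freeness of $S \cup \{a\}$ in both attack directions, and that the bound $M > \sum_{b\in\A}\s(b)$ is large enough to make the disjunction with the defense term $\coval(Att_a)$ behave as a genuine logical OR, with no spurious interaction between the terms. The weak case and the overall reduction to the completeness condition are otherwise routine once the defense-via-$Att_a$ characterization is in hand.
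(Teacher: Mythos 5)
Your proof is correct and takes essentially the same route as the paper: both reduce the statement to Proposition~\ref{proposition:encoding-strong-admissible} (admissibility of $S$ plus the forced interpretation of the $y_i$ variables) and then analyse the new constraints \textbf{(5)}/\textbf{(5')} argument-by-argument through the set $Att_a$ of undefeated attackers and the $M$-weighted terms. If anything, your handling of the strong case is slightly more careful than the paper's own write-up, which when rewriting \textbf{(5)} keeps only the target-side sum over $\Gamma^+(a)$ and cases on $Tar_a$, thereby glossing over the situation where strong defense of $a$ fails solely because $a$ has an attacker in $S$ (with $Tar_a=\emptyset$ and $\coval(Att_a)<\s(a)$) --- a case that your explicit tracking of both $M$-weighted sums covers.
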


\begin{proof}
Let $S \subseteq \A$ be a strong complete extension of $\straf$. Strong admissibility and Proposition~\ref{proposition:encoding-strong-admissible} imply that $\omega_S$ satisfies the set of constraints {\bf (1)}, {\bf (2)}, {\bf (3)} and {\bf (4)}. Let us focus on the set of constraints {\bf (5)}. For any $a \in S$, $\omega_S(x) = 1$, and the constraint is trivially satisfied (the right-hand side becomes $0$). Now consider $a \in \A \setminus S$. Let us define $Att_a = \{a_j \in \A \mid (a_j,a) \in \R, \omega_S(y_j) = 0\}$ the set of attackers of $a$ that are not defeated by any $\kappa \subseteq S$, and $Tar_a = \{a_k \in \A \mid (a,a_k) \in \R, a_k \in S\}$. The constraint becomes $\sum_{a_j \in Att_a} \s(a_j) + |Tar_a| \times M \geq \s(a)$. Suppose first that $Tar_a = \emptyset$, {\em i.e.} there is no argument in $S$ attacked by $a$. The constraint is then $\sum_{a_j \in Att_a} \s(a_j) \geq \s(a)$. Since $S$ is a strong complete extension, it does not strongly defend $a$, {\em i.e.} there is an accrual $\kappa \subseteq Att_a$ s.t. $\kappa \rhd a$, and $\nexists \kappa' \subseteq S$ with $\kappa' \rhd \kappa$. This implies that the collective strength of the arguments in $\kappa$ is greater than the strength of $a$, which means that the constraint is satisfied. No, if $Tar_a \neq \emptyset$, the constraint is satisfied as well because of $|Tar_a| \times M$ on the left-hand side.

Now, for the other direction of the proof, let us suppose that $\omega_S$ satisfies the set of constraints {\bf (1)}, {\bf (2)}, {\bf (3)}, {\bf (4)} and {\bf (5)}. The satisfaction of the sets {\bf (1)}, {\bf (2)}, {\bf (3)} and {\bf (4)} implies that $S$ is strongly admissible (see Proposition~\ref{proposition:encoding-strong-admissible}). Let us show that $S$ is a strong complete extension, {\em i.e.} it does not strongly defend any $a \in \A \setminus S$. Reasoning towards a contradiction, suppose that there is $a \in \A \setminus S$ that is strongly defended by $S$. $a \in \A \setminus S$ implies that $\omega_S(x) = 0$, so the constraint from the set {\bf (5)} becomes $\s(a_{1})\times  \overline{y_1} + \s(a_{2})\times \overline{y_2} + \dots + \s(a_{n})\times \overline{y_n} + M \times x'_1 + \dots + M \times x'_m \geq \s(a)$. Let $Att_a = \{a_j \in \A \mid (a_j,a) \in \R, \omega_S(y_j) = 0\}$ be the set of attackers of $a$ that are not defeated by any $\kappa \subseteq S$ and $Tar_a = \{a_k \in \A \mid (a,a_k) \in \R, a_k \in S\}$ the set of arguments in $S$ that are attacked by $a$. The constraint can then be rewritten $\sum_{a_j \in Att_a} \s(a_j) + |Tar_A| \times M \geq \s(a)$. Since the constraint is satisfied, it means that
\begin{itemize}
    \item either there is an accrual $\kappa = Att_a$ s.t. $\kappa \rhd a$, and there is no $\kappa' \subseteq S$ with $\kappa' \rhd \kappa$;
    \item or there is an argument $a_k \in S$ such that $(a,a_k) \in \R$, {\em i.e.} $S \cup \{a\}$ is not strongly conflict-free.
\end{itemize}
In both cases, there is a contradiction with the assumption that $S$ strongly defends $a$. So there is no such $a$: $S$ is a strong complete extension.

The proof for weak complete semantics is analogous.
\end{proof}

\paragraph{Acceptability and Verification} Obtaining one (resp. each) solution for one of the sets of constraints defined previously corresponds to obtaining one (resp. each) extension of the StrAF under the corresponding semantics.
For checking whether a given argument $a_i$ is credulously accepted, one simply needs to add the constraint $x_i = 1$. If a solution exists, then it corresponds to an extension that contains $a_i$, proving that this argument is credulously accepted. Otherwise, $a_i$ is not credulously accepted. For skeptical acceptability, one needs to add the constraint $x_i = 0$. In this case, a solution exhibits an extension that does not contain $a_i$, thus this argument is not skeptically accepted. In the case where no solution exists, then the argument is skeptically accepted. Finally, for checking whether a set of arguments $S \subseteq \A$ is an extension, one needs to add the constraints $x_i = 1$ for each $a_i \in S$, as well as $x_i = 0$ for each $a_i \in \A \setminus S$. A solution exists for the new set of constraints iff $S$ is an extension under the considered semantics.

\paragraph{Strong and Weak Preferred Semantics}
Finally, let us mention an approach to handle reasoning with strong and weak preferred semantics. Because of the higher complexity of skeptical reasoning under these semantics (recall Proposition~\ref{prop:complexity-all-results}), it is impossible (under the usual assumption that the polynomial hierarchy does not collapse) to find a (polynomial) encoding of these semantics in PB constraints. However, PB solvers can be used as oracles to find (with successive calls) preferred extensions. Algorithm~\ref{computePrefAlgorithm} describes our method to do this for strong preferred semantics (replacing {\bf (1)} by {\bf (1')} provides an algorithm for weak preferred semantics).
At start, we add the four constraints corresponding to a strong (resp. weak) admissible set and solve the instance, with the PB solver as a $\coNP$ oracle. Then we force the arguments within the extension to stay in the next one by adding the constraint on line $4$. To avoid getting the same solution as in the previous step, we make sure that at least one argument outside the previous extension will be in the next one (line $5$). This method iteratively extends an admissible set into a preferred extension, that is finally returned when the solver cannot find any (larger) solution.

\begin{algorithm}[!h]
\begin{algorithmic}
        \STATE $P = $ PB problem with constraints {\bf (1)}, {\bf (2)}, {\bf (3)} and {\bf (4)} \;
        \WHILE{$P.solve() \not= null$}
            \STATE $E \gets P.solve()$
            \STATE $P.add\_constraint(x_{1} + x_{2} + \dots + x_{n} = n)$,  with $E = \{a_1, a_2, \dots, a_n \}$
            \STATE $P.add\_constraint(x_{1} + x_{2} + \dots + x_{n} \geq 1)$,  with $\A \setminus E = \{a'_1, a'_2, \dots, a'_n \}$
        \ENDWHILE
        \RETURN $E$
\end{algorithmic}
\caption{Compute a strong preferred extension}\label{computePrefAlgorithm}
\end{algorithm}


\section{Experimental Evaluation}\label{section:experiments}

For estimating the scalability of our method based on pseudo-Boolean constraints, we present now some results obtained from our experimental evaluation using two prominent PB solvers: Sat4j \cite{LeBerreP10} and RoundingSat \cite{ElffersN18}. While Sat4j is based on saturation, RoundingSat uses the division rule (see \cite{ElffersN18} for a discussion on both approaches). We focus here on the most relevant results; full results are presented in the appendix.

\paragraph{Benchmark Generation}
We generate benchmarks in a format adapted to StrAFs, inspired by ASPARTIX formalism \cite{DvorakGRWW20}. We consider two classes of randomly generated graphs. First, with the Erd\"os–R\'enyi model (ER) \cite{ErdosRenyi59}, given a set of arguments $\A$, and $p \in [0,1]$, we generate a graph such that for each $(a,b) \in \A \times \A$, $a$ attacks $b$ with a probability $p$. We consider two values for the probability, namely $p \in \{ 0.1, 0.5\}$. Then, with the Barab\'asi–Albert (BA) model \cite{BarabasiAlbert01}, a graph of $n$ nodes is grown by attaching new nodes with $m$ edges that are preferentially attached to existing nodes with a high degree. These types of graphs have been frequently used for studying computational aspects of formal argumentation, in particular during the ICCMA competitions \cite{GagglLMW20}.
%
The choice of a generation model provides the arguments $\A$ and attacks $\R$. We attach a random strength $\s(a) \in \{1,\dots,20\}$ to each  $a \in \A$.
For each generation model, we build $20$ StrAFs for each $|\A| \in \{5, 10, 15, \dots, 60\}$.
Parameters ($p \in \{0.1, 0.5\}$ for ER, $m = 1$ for BA) are chosen to avoid graphs with a high density of attacks, that would prevent the existence of meaningful extensions ({\em e.g.} non-empty ones).
Larger StrAFs (with $|\A| \in \{5,10,\dots,250\}$) have been generated with the same parameters ($p \in \{0.1, 0.5\}$ for ER, $m = 1$ for BA) for studying the problem of providing one extension.

\paragraph{Experimental Setting} The experiments were run on a Windows computer (using Windows Subsystem for Linux), with an Intel Core i5-6600K 3.50GHz CPU and 16GB of RAM. The timeout is set to $600$ seconds (same as the timeout at ICCMA \cite{LagniezLMR20}). 

\paragraph{Results} We are interested in the semantics $\sigma_X$, with $\sigma \in \{\pr,\stb,\co\}$ and $X \in \{S,W\}$.
The encodings for 
$\stb_X$ ($X \in \{S,W\}$)
are those proposed in \cite{RossitMDM21}, while the encoding for the other semantics are those described in Section~\ref{section:pb-encoding}. For each generated $\straf$, and each of these semantics $\sigma_X$, the two tasks we are interested in consist in enumerating all extensions and finding one extension.
We first focus on the runtime for enumerating $\sigma_X$ extensions, which provides an upper bound of the runtime for solving other classical reasoning tasks. 
 To do so, we use a Python script that converts a StrAF into a set of PB constraints. The set of extensions is then obtained in a classical iterative way: once an extension is returned by the PB solver, we add a new constraint that forbids this extension, and we call again the solver on this updated set of PB constraints. This process is repeated until the set of constraints becomes unsatisfiable, which means that all the extensions have been obtained.
Concerning the preferred extensions, this iterative approach is combined with Algorithm~\ref{computePrefAlgorithm}. In order to measure the performance of our approach, and since there is no other computational approach for StrAF semantics yet, we also implemented a so-called \textit{naive} algorithm that enumerates all sets of arguments and then checks, for each of them, if it is a $\sigma_X$ extension.
Figure~\ref{fig:enumeration-runtime} presents the average runtimes w.r.t. instance sizes ({\em i.e.} $|\A|$) for various semantics and StrAF families as described before. 
As a first result, we observe in Figure~\ref{fig:weak-complete-semantics-10p} that runtime for enumerating extensions (with the PB approach) is reasonable (\textit{i.e.} less than a minute) for most of the cases considered in our study, when the PB approach is used, while the naive approach reaches the timeout for most of the large instances (in particular, all the instances with $|\A| \geq 45$). The average runtimes are higher in only two situations: the enumeration of strong preferred and strong complete extensions, with the BA graphs.
However, even in such situations where the enumeration is harder ({\em e.g.} for $\pr_S$-extensions on BA graphs, as depicted on Figure~\ref{fig:strong-preferred-semantics-bara-enum}), the PB solvers clearly outperform the naive algorithm, which reaches the timeout in every instance when $|\A| \geq 30$, while the PB approach can enumerate extensions for larger graphs.
%

\begin{figure}[htb]
    \centering
    \subfloat[$\co_W$ on ER graphs ($p = 0.1$)\label{fig:weak-complete-semantics-10p}]{
    \includegraphics[scale=0.35]{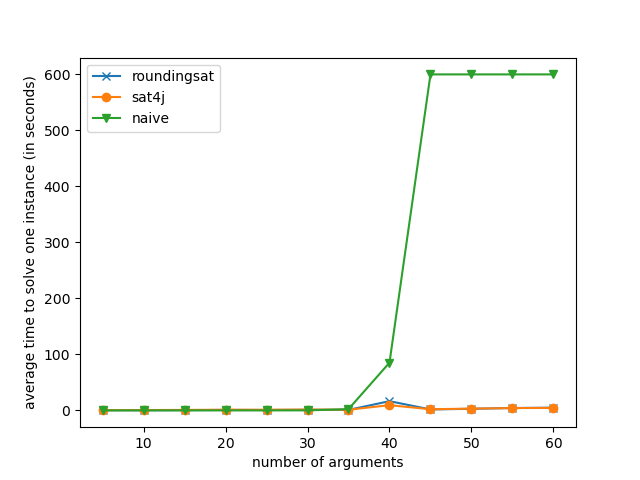} 
    }
    \subfloat[$\pr_S$ on BA graphs\label{fig:strong-preferred-semantics-bara-enum}]{
    \includegraphics[scale=0.35]{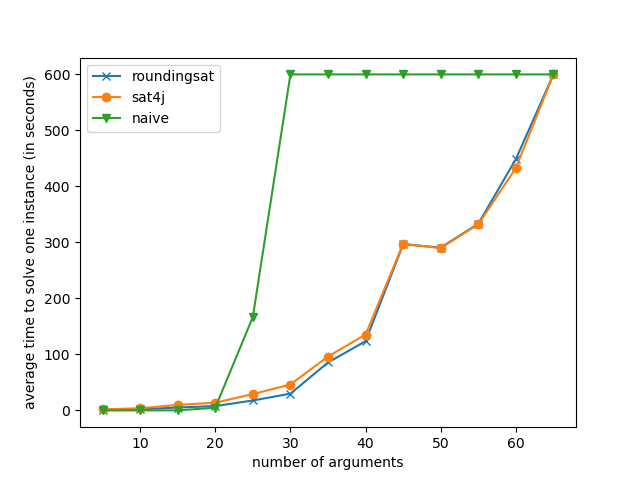} 
    }
    \caption{Enumeration runtime}
    \label{fig:enumeration-runtime}
\end{figure}


We also study the classical problem of providing one extension, for StrAFs of larger sizes (recall that here $|\A| \in \{5,10,\dots, 250\}$).
Figure~\ref{fig:weak-preferred-semantics-bara-one} shows that the PB solvers (in particular, Sat4j) provide one extension for these large graphs under two minutes, even for the preferred semantics (which is the hardest one, in our study, from the computational point of view).
Concerning the respective performances of the two PB solvers, Figure~\ref{fig:weak-preferred-semantics-bara-one} shows that RoundingSat processes faster for fast-to-compute instances ({\em i.e.} the smallest ones), while Sat4j outperforms it for instances of larger size. While we do not have explanations for this phenomenon, a plausible assumption is that it is related to the difference of the underlying algorithms (saturation for Sat4j and division rule for RoundingSat). Similar things have been observed for SAT solvers used in the case of standard AFs \cite{GningM20}.


\begin{figure}[htb]
\centering
\includegraphics[scale=0.35]{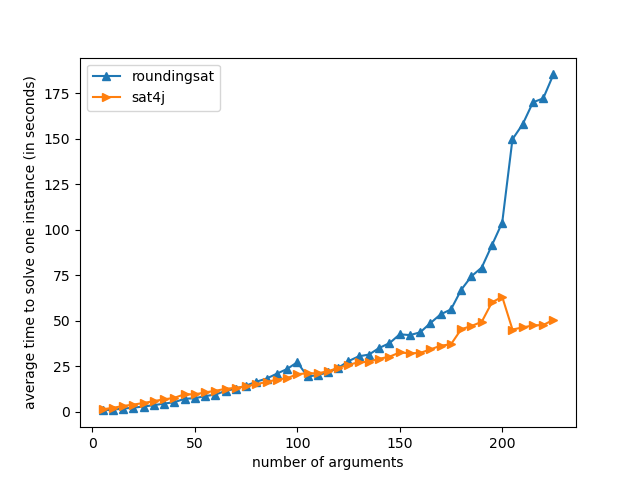} 
\caption{Finding one extension runtime under $\pr_s$ on BA graphs
\label{fig:weak-preferred-semantics-bara-one}}
\end{figure}

As a general conclusion on our experimental analysis, we observe that the PB approach for reasoning with StrAFs generally scales up well, for both problems of enumerating extensions and providing one extension.

\section{Conclusion}\label{section:conclusion}
Strength-based Argumentation Frameworks (StrAFs) have originally been proposed in~\cite{RossitMDM21}.
Contrary to this work, in this paper we focused on admissibility-based semantics. We showed that the weak admissibility-based semantics defined in the original work satisfy some expected properties, namely Dung's Fundamental Lemma. However, the definition for strong admissibility proposed in \cite{RossitMDM21} does not yield semantics that behave as expected. This has conducted us to revisit the definition of strong admissibility, and this allowed us to introduce strong complete and preferred semantics. We have also enhanced the StrAFs literature by studying the computational complexity of classical reasoning problems for these semantics, and we have shown that it is the same as for the corresponding tasks in Dung's framework, in spite of the increase of expressivity. Then we have proposed a method based on pseudo-Boolean constraints for computing the extensions of a StrAF under the various semantics defined in this paper, and we have empirically evaluated the scalability of this approach for the new semantics defined in this paper, as well as the (weak and strong) stable semantics from \cite{RossitMDM21}. 

As future work we have identified several promising research tracks, including the study of (weak and strong) grounded semantics, and tight complexity results for the skeptical reasoning under the (weak and strong) complete semantics.
We are also interested in an analysis of the relation between StrAFs and other frameworks, in particular the comparison of the signatures of StrAFs semantics and SETAFs semantics \cite{NielsenP2006,DvorakFW19,FlourisB19}. Finally, we want to study argument strength and accrual in a context of structured argumentation.

\appendix

\section{Experimental Results}
This section describes additional experimental results, not presented in the main part of the paper because of space constraints.

\begin{figure}[htb]
    \centering
    \subfloat[ER graphs with p = 0.1\label{fig:strong-complete-semantics-10p}]{
    \includegraphics[scale=0.35]{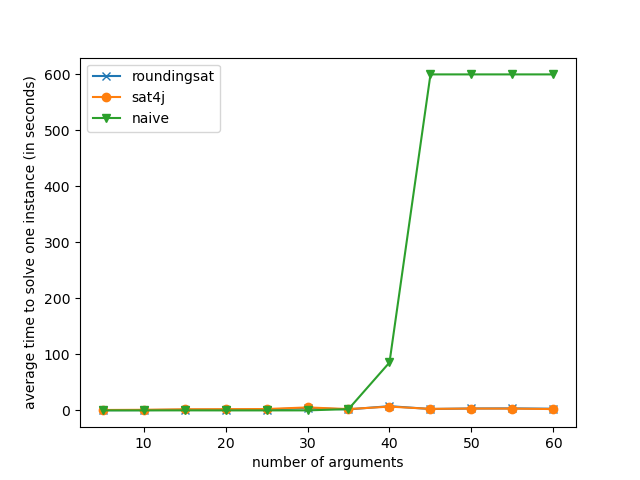} 
    }
    \subfloat[ER graphs with p = 0.5\label{fig:strong-complete-semantics-50p}]{
    \includegraphics[scale=0.35]{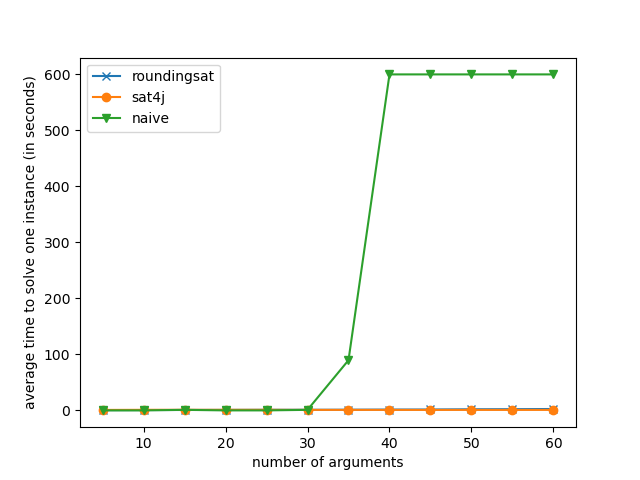}
    }
    
    \subfloat[BA graphs \label{fig:strong-complete-semantics-bara}]{
    \includegraphics[scale=0.35]{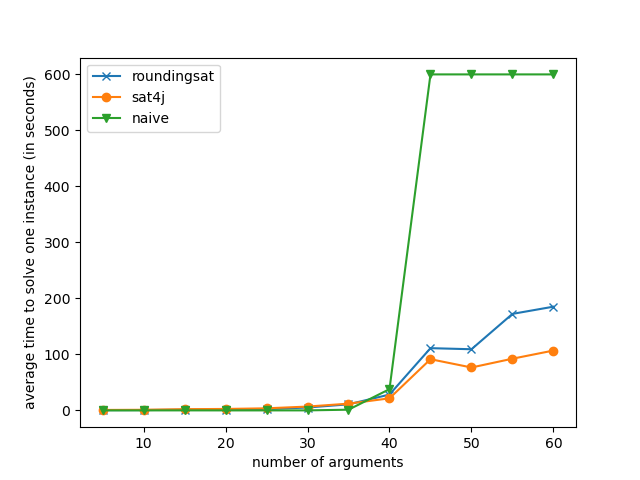} 
    }
    \caption{Enumeration runtime under $\co_{S}$ for various types of graphs}
\end{figure}




\begin{figure}[htb]
    \centering
    \subfloat[ER graphs with p = 0.1\label{fig:weak-complete-semantics-10p}]{
    \includegraphics[scale=0.35]{Images/wcom10pcomp.png} 
    }
    \subfloat[ER graphs with p = 0.5\label{fig:weak-complete-semantics-50p}]{
    \includegraphics[scale=0.35]{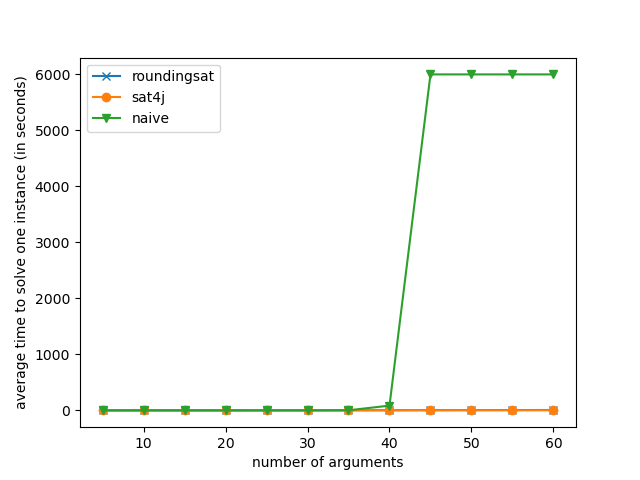}
    }
    
    \subfloat[BA graphs \label{fig:weak-complete-semantics-bara}]{
    \includegraphics[scale=0.35]{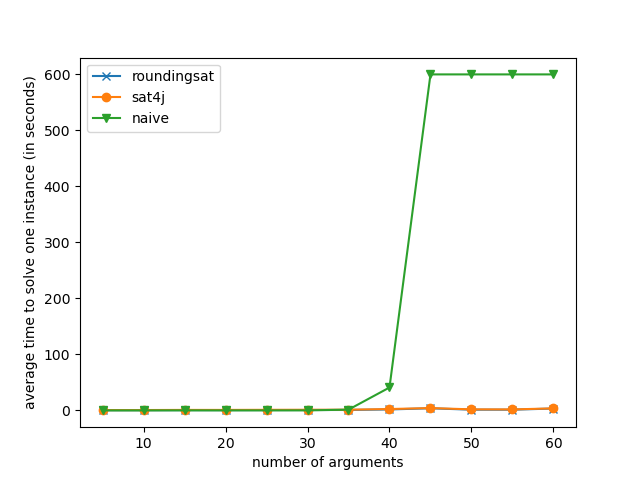} 
    }
    \caption{Enumeration runtime under $\co_{W}$ for various types of graphs}
\end{figure}




\begin{figure}[htb]
    \centering
    \subfloat[ER graphs with p = 0.1\label{fig:strong-preferred-semantics-10p}]{
    \includegraphics[scale=0.35]{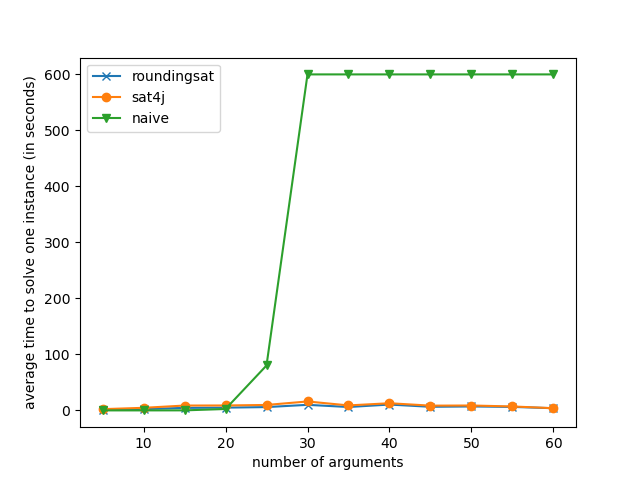} 
    }
    \subfloat[ER graphs with p = 0.5\label{fig:strong-preferred-semantics-50p}]{
    \includegraphics[scale=0.35]{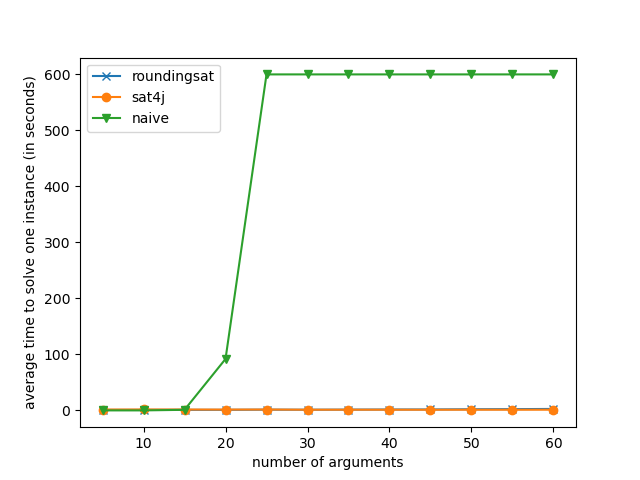}
    }
    
    \subfloat[BA graphs \label{fig:strong-preferred-semantics-bara}]{
    \includegraphics[scale=0.35]{Images/sprefbaracomp.png} 
    }
    \caption{Enumeration runtime under $\pr_{S}$ for various types of graphs}
\end{figure}




\begin{figure}[htb]
    \centering
    \subfloat[ER graphs with p = 0.1\label{fig:weak-preferred-semantics-10p}]{
    \includegraphics[scale=0.35]{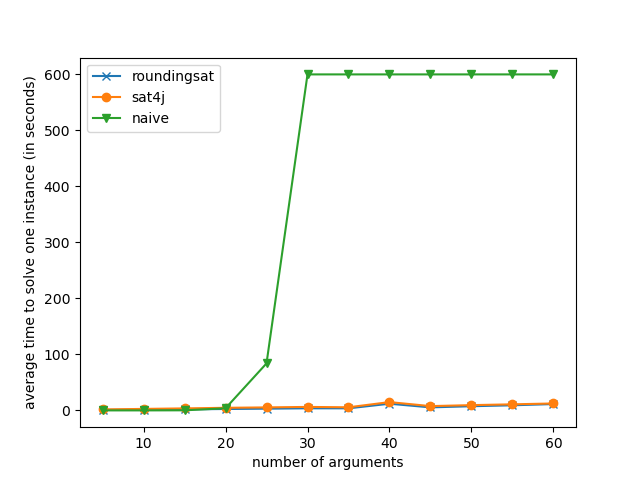} 
    }
    \subfloat[ER graphs with p = 0.5\label{fig:weak-preferred-semantics-50p}]{
    \includegraphics[scale=0.35]{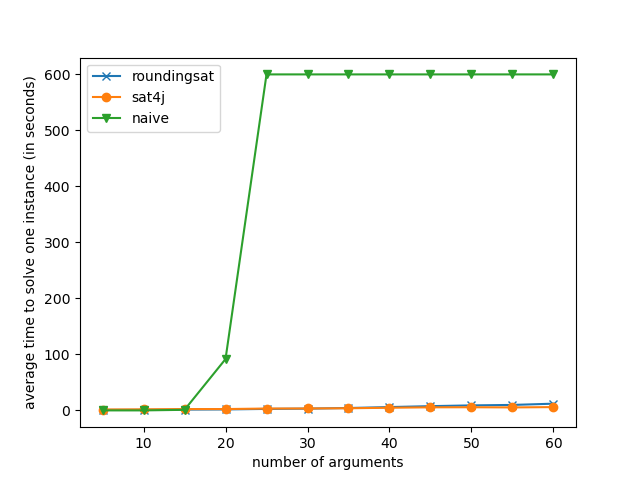}
    }
    
    \subfloat[BA graphs \label{fig:weak-preferred-semantics-bara}]{
    \includegraphics[scale=0.35]{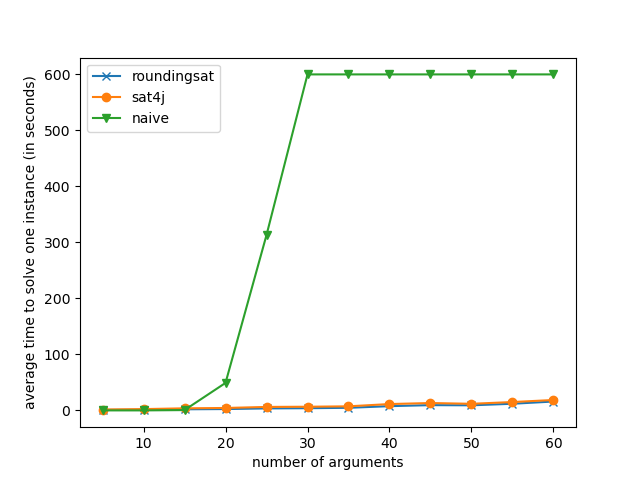} 
    }
    \caption{Enumeration runtime under $\pr_{W}$ for various types of graphs}
\end{figure}




\begin{figure}[htb]
    \centering
    \subfloat[ER graphs with p = 0.1\label{fig:strong-stable-semantics-10p}]{
    \includegraphics[scale=0.35]{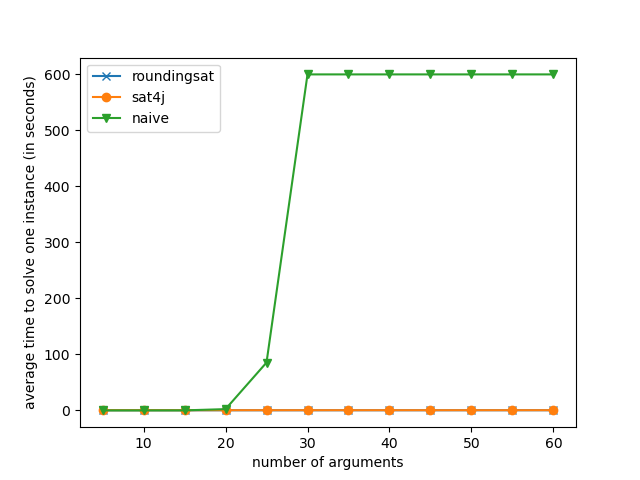} 
    }
    \subfloat[ER graphs with p = 0.5\label{fig:strong-stable-semantics-50p}]{
    \includegraphics[scale=0.35]{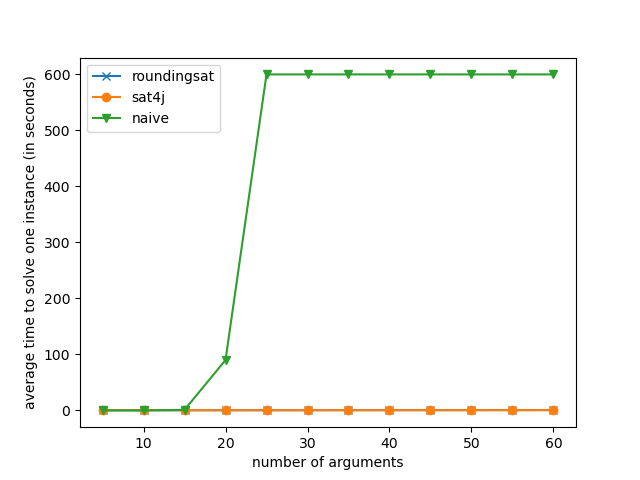}
    }
    
    \subfloat[BA graphs \label{fig:strong-stable-semantics-bara}]{
    \includegraphics[scale=0.35]{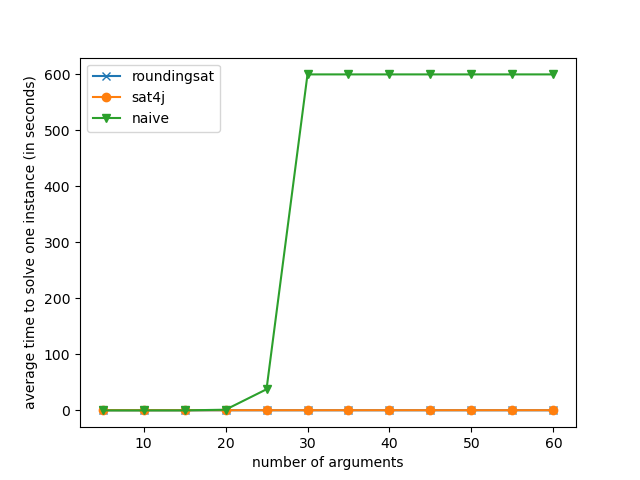} 
    }
    \caption{Enumeration runtime under $\stb_{S}$ for various types of graphs}
\end{figure}




\begin{figure}[htb]
    \centering
    \subfloat[ER graphs with p = 0.1\label{fig:weak-stable-semantics-10p}]{
    \includegraphics[scale=0.35]{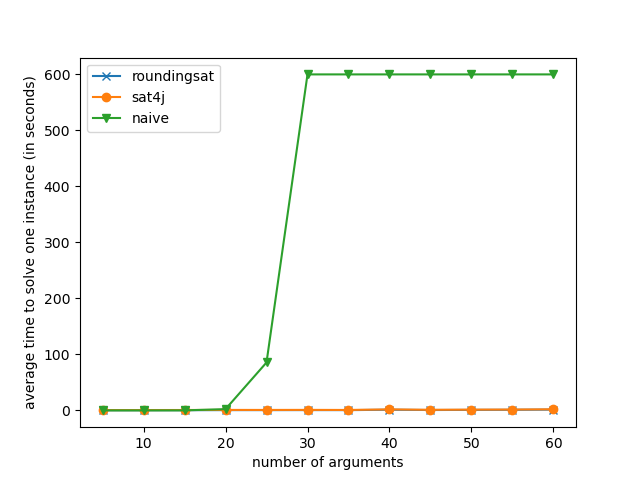} 
    }
    \subfloat[ER graphs with p = 0.5\label{fig:weak-stable-semantics-50p}]{
    \includegraphics[scale=0.35]{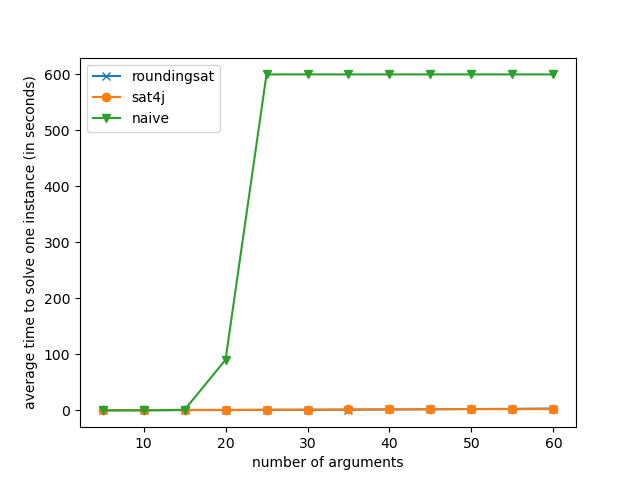}
    }
    
    \subfloat[BA graphs \label{fig:weak-stable-semantics-bara}]{
    \includegraphics[scale=0.35]{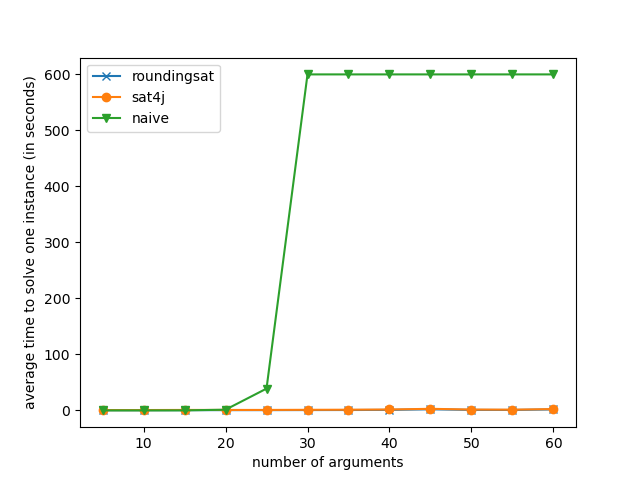} 
    }
    \caption{Enumeration runtime under $\stb_{W}$ for various types of graphs}
\end{figure}




%
\bibliographystyle{unsrt}  
\bibliography{references} 

\begin{thebibliography}{10}

\bibitem{Dung95}
Phan~Minh Dung.
\newblock On the acceptability of arguments and its fundamental role in
  nonmonotonic reasoning, logic programming and n-person games.
\newblock {\em Artif. Intell.}, 77(2):321--358, 1995.

\bibitem{CayrolL13}
Claudette Cayrol and Marie{-}Christine Lagasquie{-}Schiex.
\newblock Bipolarity in argumentation graphs: Towards a better understanding.
\newblock {\em Int. J. Approx. Reason.}, 54(7):876--899, 2013.

\bibitem{AmgoudC02}
Leila Amgoud and Claudette Cayrol.
\newblock A reasoning model based on the production of acceptable arguments.
\newblock {\em Annals of Mathematics and Artificial Intelligence},
  34(1-3):197--215, 2002.

\bibitem{bcp02}
Trevor Bench-Capon.
\newblock Value-based argumentation frameworks.
\newblock In {\em Proc. of NMR'02}, pages 443--454, 2002.

\bibitem{RossitMDM21}
Julien Rossit, Jean-Guy Mailly, Yannis Dimopoulos, and Pavlos Moraitis.
\newblock United we stand: Accruals in strength-based argumentation.
\newblock {\em Argument Comput.}, 12(1):87--113, 2021.

\bibitem{HOFASemantics}
Pietro Baroni, Martin Caminada, and Massimiliano Giacomin.
\newblock Abstract argumentation frameworks and their semantics.
\newblock In {\em Handbook of Formal Argumentation}, pages 159--236. College
  Publications, 2018.

\bibitem{HOFAComplexity}
Wolfgang Dvor{\'{a}}k and Paul~E. Dunne.
\newblock Computational problems in formal argumentation and their complexity.
\newblock In {\em Handbook of Formal Argumentation}, pages 631--688. College
  Publications, 2018.

\bibitem{AroraB2009}
Sanjeev Arora and Boaz Barak.
\newblock {\em Computational Complexity - {A} Modern Approach}.
\newblock Cambridge UP, 2009.

\bibitem{RousselM09}
Olivier Roussel and Vasco~M. Manquinho.
\newblock Pseudo-boolean and cardinality constraints.
\newblock In {\em Handbook of Satisfiability}, pages 695--733. 2009.

\bibitem{MartinsML14}
Ruben Martins, Vasco~M. Manquinho, and In{\^{e}}s Lynce.
\newblock Open-wbo: {A} modular maxsat solver,.
\newblock In {\em Proc. of SAT'14}, pages 438--445, 2014.

\bibitem{ElffersN18}
Jan Elffers and Jakob Nordstr{\"{o}}m.
\newblock Divide and conquer: Towards faster pseudo-boolean solving.
\newblock In {\em Proc. of {IJCAI}'18}, pages 1291--1299, 2018.

\bibitem{LeBerreP10}
Daniel {Le Berre} and Anne Parrain.
\newblock The sat4j library, release 2.2.
\newblock {\em J. Satisf. Boolean Model. Comput.}, 7(2-3):59--6, 2010.

\bibitem{DvorakGRWW20}
Wolfgang Dvor{\'{a}}k, Sarah~Alice Gaggl, Anna Rapberger, Johannes~Peter
  Wallner, and Stefan Woltran.
\newblock The {ASPARTIX} system suite.
\newblock In {\em Proc. of {COMMA}'20}, pages 461--462, 2020.

\bibitem{ErdosRenyi59}
Paul Erd\"os and Alfr\'ed R\'enyi.
\newblock On random graphs. i.
\newblock {\em Publicationes Mathematicae}, 6:290--297, 1959.

\bibitem{BarabasiAlbert01}
Albert-Laszlo Barabasi and Reka Albert.
\newblock Emergence of scaling in random networks.
\newblock {\em Science}, 286(5439):509--512, 1999.

\bibitem{GagglLMW20}
Sarah~Alice Gaggl, Thomas Linsbichler, Marco Maratea, and Stefan Woltran.
\newblock Design and results of the second international competition on
  computational models of argumentation.
\newblock {\em Artif. Intell.}, 279, 2020.

\bibitem{LagniezLMR20}
Jean-Marie Lagniez, Emmanuel Lonca, Jean-Guy Mailly, and Julien Rossit.
\newblock Introducing the fourth international competition on computational
  models of argumentation.
\newblock In {\em Proc. of SAFA'20}, pages 80--85, 2020.

\bibitem{GningM20}
Serigne Gning and Jean-Guy Mailly.
\newblock On the impact of {SAT} solvers on argumentation solvers.
\newblock In {\em Proc. of SAFA'20}, volume 2672, pages 68--73, 2020.

\bibitem{NielsenP2006}
S{\o}ren~Holbech Nielsen and Simon Parsons.
\newblock A generalization of dung’s abstract framework for argumentation:
  Arguing with sets of attacking arguments.
\newblock In {\em Proc. of ArgMAS'06}, pages 54--73. Springer, 2006.

\bibitem{DvorakFW19}
Wolfgang Dvor{\'{a}}k, Jorge Fandinno, and Stefan Woltran.
\newblock On the expressive power of collective attacks.
\newblock {\em Argument Comput.}, 10(2):191--230, 2019.

\bibitem{FlourisB19}
Giorgos Flouris and Antonis Bikakis.
\newblock A comprehensive study of argumentation frameworks with sets of
  attacking arguments.
\newblock {\em Int. J. Approx. Reason.}, 109:55--86, 2019.

\end{thebibliography}
\end{document}